\newtheorem{theorem}{\bf Theorem}[section]
\newtheorem{lemma}[theorem]{\bf Lemma}
\newtheorem{proposition}[theorem]{\bf Proposition}
\newtheorem{assumption}[theorem]{\bf Assumption}
\newtheorem{definition}[theorem]{\bf Definition}
\def \bE {\mathbb{E}}
\def \bN {\mathbb{N}}
\def \bR {\mathbb{R}}
\def \cF {{\cal F}}
\def \cH {{\cal H}}
\def \cL {{\cal L}}
\def \cM {{\cal M}}
\def \cN {{\cal N}}
\def \cO {{\cal O}}
\def \cP {{\cal P}}
\def \cS {{\cal S}}
\def \cW {{\cal W}}
\def \Lip {\,{\rm Lip}\,}
\def \MMD {\,{\rm MMD}\,}
\def\blfootnote{\xdef\@thefnmark{}\@footnotetext}
\begin{document}

\title{On the capacity of deep generative networks for approximating distributions}

%
%
%

\author{
	Yunfei Yang \thanks{Department of Mathematics, The Hong Kong University of Science and Technology, Clear Water Bay, Kowloon, Hong Kong} \footnotemark[2] 
	\and
	Zhen Li \thanks{Theory Lab, Huawei Technologies Co., Ltd., Shenzhen, China}
	\and
	Yang Wang \footnotemark[1]
}
\date{}\maketitle
\blfootnote{Correspondence to: Yunfei Yang (yyangdc@connect.ust.hk).}

\begin{abstract}
We study the efficacy and efficiency of deep generative networks for approximating probability distributions. We prove that neural networks can transform a low-dimensional source distribution to a distribution that is arbitrarily close to a high-dimensional target distribution, when the closeness are measured by Wasserstein distances and maximum mean discrepancy. Upper bounds of the approximation error are obtained in terms of the width and depth of neural network. Furthermore, it is shown that the approximation error in Wasserstein distance grows at most linearly on the ambient dimension and that the approximation order only depends on the intrinsic dimension of the target distribution. On the contrary, when $f$-divergences are used as metrics of distributions, the approximation property is different. We show that in order to approximate the target distribution in $f$-divergences, the dimension of the source distribution cannot be smaller than the intrinsic dimension of the target distribution. 

\smallskip
\noindent \textbf{Keywords:} Deep ReLU networks; generative adversarial networks; approximation complexity; Wasserstein distance; maximum mean discrepancy.
\end{abstract}

\section{Introduction}

In recent years, deep generative models have made remarkable success in many applications such as image synthesis \citep{reed2016generative}, style transfer \citep{gatys2016image}, medical imaging \citep{yi2019generative} and natural language generation \citep{bowman2016generating}. In these applications, the probability distributions of interest are often high-dimensional, highly complex and computationally intractable. Typical deep generative models, such as variational autoencoders (VAEs) and generative adversarial networks (GANs), use deep neural networks to generate these complex distributions from simple and low-dimensional source distributions \citep{kingma2014auto,goodfellow2014generative,arjovsky2017wasserstein}. Despite their success in practice, the theoretical understanding of these models is still very limited. One of the fundamental questions on deep generative models is their capacity of expressing probability distributions. Specifically, is it possible to approximate a high-dimensional distribution by transforming a low-dimensional distribution using neural networks? What metrics of distributions should be used? And what is the required size of the network for a given accuracy? 

\subsection{Our contributions}
In this paper, we study the expressive power of neural networks for generating distributions and provide some answers to the above questions. Specifically, for a low-dimensional probability distribution $\nu$ on $\bR^m$, we consider how well a high-dimensional probability distribution $\mu$ defined on $\bR^d$ can be approximated by the push-forward distribution $\phi_\# \nu$, using the ReLU neural network $\phi: \bR^m\to \bR^d$ as a transportation map. To quantify the approximation error, we consider three typical types of metrics (discrepancies) used in generative models: Wasserstein distances, maximum mean discrepancy (MMD) and $f$-divergences.

For Wasserstein distances and MMD, we construct a neural network $\phi$ such that the generated distribution $\phi_\# \nu$ is arbitrarily close to the target distribution $\mu$. Approximation error bounds are also obtained in terms of the width and depth of neural network. Comparing with existing works \citep{lee2017ability,bailey2018size,perekrestenko2020constructive,lu2020universal} on similar topics, our results only make very weak assumptions on the distributions: absolute continuity for source distribution $\nu$ and moment conditions for the target $\mu$. And we also allow the dimension of the source distribution to be different from the dimension of the target distribution. Furthermore, it is proved that the approximation orders in Wasserstein distances only depend on the intrinsic dimension of the target distribution, which indicates that generative networks can overcome the curse of dimensionality. To the best of our knowledge, this is the first result in distribution approximation by generative networks that shows the dependency on the intrinsic dimensionality of the target distribution. 

For $f$-divergences, a similar argument as in \citep{arjovsky2017towards} shows that, when the dimension of the source distribution is smaller than the intrinsic dimension of the target distribution, the $f$-divergences of the target and the generated distributions are positive constants. Hence, in this case, it is impossible to approximate the target distribution using neural networks. Our results suggest that, from an approximation point of view, $f$-divergences are less adequate as metrics of distributions than Wasserstein distances and MMD for deep generative models.

\subsection{Related work}

The expressive power of neural networks for approximating functions has been studied extensively in the past three decades. Early works \citep{cybenko1989approximation,hornik1991approximation,barron1993universal,pinkus1999approximation} showed that two-layer neural networks are universal in the sense that they can approximate any continuous functions on compact sets, provided that the width is sufficiently large. In particular, Barron \citep{barron1993universal} gave an upper bound of the approximation error when the function of interest satisfies certain conditions on Fourier's frequency domain. Recently, the capacity of deep neural networks for approximating certain classes of smooth functions has been quantified in terms of number of parameters \citep{yarotsky2017error,yarotsky2018optimal,petersen2018optimal,yarotsky2019phase} or number of neurons \citep{shen2019deep,lu2020deep}.

Despite the vast amount of research on function approximation by neural networks, there are only a few papers studying the representational capacity of generative networks for approximating distributions. Let us compare our results with the most related works \citep{lee2017ability,bailey2018size,perekrestenko2020constructive,lu2020universal}. The paper \citep{lee2017ability} considered a special form of target distributions, which are push-forward measures of the source distributions via composition of Barron functions. These distributions, as they proved, can be approximated by deep generative networks. But it is not clear what probability distributions can be represented in the form they proposed. 
The works \citep{bailey2018size} and \citep{perekrestenko2020constructive} studied similar questions as ours. They showed that one can approximately transform low-dimensional distributions to high-dimensional distributions using neural networks in some cases. In \citep{bailey2018size}, the source and target distributions are restricted to uniform and Gaussian distributions. The paper \citep{perekrestenko2020constructive} proved the case that the source distribution is uniform and the target distribution has Lipschitz-continuous density function with bounded support. In this paper, we extend their results to a more general setting that the source distribution is absolutely continuous and the target only satisfies some moment conditions. Furthermore, we also show that the approximation orders in Wasserstein distances only depend on the intrinsic dimension of the target distribution, which is the first theoretical result of this kind.

In \citep{lu2020universal}, the authors showed that the gradients of neural networks, as transforms of distributions, are universal when the source and target distributions are of the same dimension. Their proof relies on the theory of optimal transport \citep{villani2008optimal}, which is only available between distributions of the same dimensions. Hence their approach cannot be simply extended to the case that the source and target distributions are of different dimensions. 
In contrast, we prove that neural networks are universal approximators for probability distributions even when the source distribution is one-dimensional. It means that neural networks can approximately transport low-dimensional distributions to high-dimensional distributions, which suggests some possible generalization of the optimal transport theory.

There is another line of works \citep{liang2018well,chen2020statistical} considering the estimation error of GANs. Similar to the error analysis of regression and classification, it was shown that the estimation error of GANs can be decomposed into three parts: statistical error, approximation errors of discriminator and generator. The statistical error can be bounded using statistical learning theory \citep{anthony2009neural,mohri2018foundations}. The discriminator approximation error can be dealt with using the function approximation theory of neural networks \citep{yarotsky2017error,shen2019deep}. Our results can be applied to estimate the generator approximation error.

\subsection{Notation and definition}

Let $\bN:=\{1,2,\dots\}$ be the set of natural numbers.
We denote the Euclidean distance of two points $x,y\in \bR^d$ by $|x-y|$. The ReLU function \citep{nair2010rectified} is denoted by $\sigma(x) := \max\{x,0\}$. For two probability measures $\mu$ and $\nu$, $\mu \perp \nu$ denotes that $\mu$ and $\nu$ are singular, $\mu \ll \nu$ denotes that $\mu$ is absolutely continuous with respect to $\nu$ and in this case the Radon–Nikodym derivative is denoted by $d\mu/d\nu$. We say $\mu$ is absolutely continuous if it is absolutely continuous with respect to the Lebesgue measure, which is equivalent to the statement that $\mu$ has probability density function.

\begin{definition}[Push-forward measure]
Let $\nu$ be a measure on $\bR^m$ and $\phi:\bR^m\rightarrow\bR^d$ a measurable mapping, where $m,d\in\bN$. The push-forward measure $\phi_\# \nu$ of a measurable set $A\subseteq \bR^d$ is defined as $\phi_\# \nu(A) := \nu(\phi^{-1}(A))$.
\end{definition}

\begin{definition}[Neural networks]
Let $L,N_0,\dots,N_{L+1} \in \bN$. A ReLU neural network with $L$ hidden layers is a collection of mapping $\phi:\bR^{N_0} \mapsto \bR^{N_{L+1}}$ of the form
$$
\phi(x) = A_L \circ \sigma \circ A_{L-1} \circ \cdots \circ \sigma \circ A_0 (x) \quad x\in \bR^{N_0},
$$
where $\sigma$ is applied element-wisely, $A_\ell(y) = M_\ell y+b_\ell$ is affine with $M_\ell \in \bR^{N_{\ell+1}\times N_\ell}$, $b_\ell\in \bR^{N_{\ell+1}}$, $\ell=0,\dots,L$. The quantities $W=\max_{\ell=1,\dots,L} N_\ell$ and $L$ are called the width and depth of the neural network, respectively. When the input and output dimensions are clear from the context, we denote $\cN\cN(W,L)$ as the set of functions that can be represented by ReLU neural networks with width at most $W$ and depth at most $L$.
\end{definition}

\begin{definition}[Covering number]
Given a set $S\subseteq \bR^d$, the $\epsilon$-covering number $\cN_\epsilon(S)$ is the smallest integer $m$ such that there exists $x_1,\dots,x_m\in \bR^d$ satisfying $S\subseteq \cup_{j=1}^m B(x_j,\epsilon)$, where $B(x,\epsilon)$ is the open ball of radius $\epsilon$ around $x$. For a probability measure $\mu$ on $\bR^d$, the $(\epsilon,\delta)$-covering number of $\mu$ is defined as
$$
\cN_\epsilon(\mu,\delta) := \inf \{\cN_\epsilon(S): \mu(S)\ge 1-\delta \}.
$$
\end{definition}

In this paper, we consider three classes of ``distances'' on probability distributions:
\begin{itemize}
	
	\item For $p\in [1,\infty)$, the $p$-th Wasserstein distance between two probability measures on $\bR^d$ is the optimal transportation cost defined as
	$$
	\cW_p(\mu,\nu) := \inf_{\gamma\in \prod(\mu,\nu)} (\bE_{\gamma(x,y)} |x-y|^p)^{1/p},
	$$
	where $\prod(\mu,\nu)$ denotes the set of all joint probability distributions $\gamma(x,y)$ whose marginals are respectively $\mu$ and $\nu$. A distribution $\gamma\in \prod(\mu,\nu)$ is called a coupling of $\mu$ and $\nu$. There always exists an optimal coupling that achieves the infimum \citep{villani2008optimal}. The Kantorovich-Rubinstein duality gives an alternative definition of $\cW_1$:
	$$
	\cW_1(\mu,\nu) = \sup_{\|f \|_{\Lip} \le 1} \bE_{\mu} [f] - \bE_{\nu} [f].
	$$
	This duality is used in Wasserstein GAN \citep{arjovsky2017wasserstein} to estimate the distance between the target and generated distributions. More generally, $\cW_p$ can be estimated by certain Besov norms of negative smoothness under some conditions \citep{weed2019estimation}.
	
	\item Let $\cH_K$ be a reproducing kernel Hilbert space (RKHS) with kernel $K:\bR^d \times \bR^d \to \bR$ \citep{aronszajn1950theory,berlinet2011reproducing}. The maximum mean discrepancy (MMD) between two probability distributions on $\bR^d$ is defined by \citep{gretton2012kernel}:
	$$
	\MMD(\mu,\nu) := \sup_{\|f\|_{\cH_K}\le 1} \bE_{\mu} [f] - \bE_{\nu} [f].
	$$
	Note that MMD and the Wasserstein distance $\cW_1$ are special cases of integral probability metrics \citep{muller1997integral}.
	
	\item The $f$-divergences, introduced by \citep{ali1966general} and \citep{csiszar1967information}, can be defined for all convex functions $f:(0,\infty) \to \bR$ with $f(1)=0$ as follows: Given two probability distributions $\mu,\nu$ that are absolutely continuous with respect to some probability measure $\tau$, let their Radon-Nikodym derivatives be $p=d\mu/d\tau$ and $q=d\nu/d\tau$. Then the $f$-divergence of $\mu$ from $\nu$ is defined as
	\begin{align*}
	D_f(\mu \| \nu) &:= \int_{\bR^d} f\left(\frac{p(x)}{q(x)} \right) q(x) d\tau(x) \\
	&= \int_{q>0} f\left(\frac{p(x)}{q(x)} \right) q(x) d\tau(x) +  f^*(0)\mu(q=0),
	\end{align*}
	where we denote $f(0):=\lim_{t\downarrow 0}f(t)$, $f^*(0):=\lim_{t\to \infty} f(t)/t$ and we adopt the convention that $f(\frac{p(x)}{q(x)}) q(x)=0$ if $p(x)=q(x)=0$, and $f(\frac{p(x)}{q(x)}) q(x)= f^*(0)p(x)$ if $q(x)=0$ and $p(x)\neq 0$. It can be shown that the definition is independent of the choice of $\tau$ and hence we can always choose $\tau = (\mu+\nu)/2$. 
	
\end{itemize}

\section{Approximation capacity of generative networks}

\subsection{Approximation in Wasserstein distances}

Given a source probability distribution $\nu$ on $\bR^m$, the objective of deep generative models is to find a deep neural network $\phi:\bR^m \to \bR^d$ such that the push-forward measure $\phi_\# \nu$ is close to the unknown target distribution $\mu$ on $\bR^d$ under certain metric, so that we can generate new samples using $\phi$. For instance, Wasserstein GAN \citep{arjovsky2017wasserstein} tries to compute a transform $\phi \in \cN\cN(W,L)$ that minimizes $\cW_1(\mu,\phi_\# \nu)$. 
In this work, we study approximation capacity of deep generative models, i.e., how well $\phi_\# \nu$ can approximate the target distribution $\mu$. Specifically, our goal is to estimate the quantity
$$
\cW_p(\mu,\cN\cN(W,L)_\# \nu):= \inf_{\phi \in \cN\cN(W,L)} \cW_p(\mu, \phi_\# \nu),
$$
where $\nu$ is an absolutely continuous probability distribution.

The basic idea is depicted as follows. To bound the approximation error $\cW_p(\mu,\cN\cN(W,L)_\#\nu)$, we first approximate the target distribution $\mu$ by a discrete probability measure $\gamma$, and then construct a neural network $\phi$ such that the push-forward measure $\phi_\# \nu$ is close to the discrete distribution $\gamma$. By the triangle inequality for Wasserstein distances, one has
\begin{align*}
\cW_p(\mu,\cN\cN(W,L)_\#\nu) 
\le& \cW_p(\mu,\gamma) + \cW_p(\gamma,\cN\cN(W,L)_\#\nu) \\
\le&  \cW_p(\mu,\gamma) + \sup_{\xi\in \cP(n)} \cW_p(\xi,\cN\cN(W,L)_\#\nu),
\end{align*}
where $\gamma \in \cP(n)$ and $\cP(n)$ is the set of all discrete probability measures supported on at most $n$ points, that is,
$$
\cP(n) := \left\{ \sum _{i=1}^n p_i\delta_{x_i}: \sum_{i=1}^n p_i=1, p_i\ge 0, x_i\in \bR^d \right\}.
$$
Taking the infimum over all $\gamma \in \cP(n)$, we get
\begin{equation}\label{upper bound}
\begin{aligned}
&\cW_p(\mu,\cN\cN(W,L)_\#\nu)
\le& \cW_p(\mu,\cP(n)) + \sup_{\xi\in \cP(n)} \cW_p(\xi,\cN\cN(W,L)_\#\nu),
\end{aligned}
\end{equation}
where $\cW_p(\mu,\cP(n)):=\inf_{\gamma\in \cP(n)}\cW_p(\mu,\gamma)$ measures the distance between $\mu$ and discrete distributions in $\cP(n)$. We will show that the second term vanishes as long as the width $W$ and depth $L$ of the neural network in use are sufficiently large (Lemma \ref{app discrete measure}). Consequently, our approximation problem is reduced to the estimation of the approximation error $\cW_p(\mu,\cP(n))$. We study the case that the target distribution $\mu$ has finite absolute $q$-moment
$$
M_q(\mu) := \left( \int_{\bR^d} |x|^q d\mu(x) \right)^{1/q} <\infty, \quad 1\le q<\infty.
$$ 
The main result can be summarized as follows.

\begin{theorem}\label{app finite moment}
Let $p\in [1,\infty)$ and $\nu$ be an absolutely continuous probability distribution on $\bR$. Assume that $\mu$ is a probability distribution on $\bR^d$ with finite absolute $q$-moment $M_q(\mu)<\infty$ for some $q>p$. Then, for any $W\ge 7d+1$ and $L\ge 2$,
$$
\cW_p(\mu,\cN\cN(W,L)_\# \nu) \le C (M_q^q(\mu)+1)^{1/p}
\begin{cases}
(W^2L)^{-1/d}, &q>p+p/d  \\
(W^2L)^{-1/d}(\log_2 W^2L)^{1/d}, &p<q\le p+p/d
\end{cases}
$$
where $C$ is a constant depending only on $p$, $q$ and $d$.
\end{theorem}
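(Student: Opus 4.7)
The plan is to follow the decomposition displayed just above the theorem: bound $W_p(\mu, \cN\cN(W,L)_\#\nu)$ by $W_p(\mu,\cP(n)) + \sup_{\xi\in\cP(n)} W_p(\xi, \cN\cN(W,L)_\#\nu)$ and control the two pieces separately. The second piece should be handled by Lemma~\ref{app discrete measure}, which I expect to state that every $n$-atom discrete measure on $\bR^d$ is exactly representable as $\phi_\#\nu$ for some $\phi\in\cN\cN(W,L)$ provided $n \le c(d)\,W^2L$ (the hypothesis $W\ge 7d+1$ giving just enough width to run a scalar bit-extraction circuit while simultaneously carrying $d$ coordinate accumulators). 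Taking $n\asymp W^2L$ then reduces the entire theorem to bounding the quantisation error $W_p(\mu,\cP(n))$.

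For this quantisation step I would use a multi-scale dyadic decomposition. Fix a radius $R$ of order $1$ and write $\bR^d = A_0\cup\bigcup_{k\ge 1} A_k$ with $A_0 = B(0,R)$ and $A_k = B(0,2^kR)\setminus B(0,2^{k-1}R)$. Partition each $A_k$ into cubes of side $\epsilon_k := \epsilon\cdot 2^{\alpha k}$ for an exponent $\alpha$ to be chosen, place an atom of mass $\mu(Q)$ at the centre of every cube $Q$, and call the resulting discrete measure $\gamma$. Gluing local couplings cube-by-cube and applying Markov's inequality to $|x|^q$ to get $\mu(A_k)\le M_q^q(2^{k-1}R)^{-q}$ for $k\ge 1$ yields
\begin{equation*}
W_p^p(\mu,\gamma)\;\le\; d^{p/2}\Bigl[\epsilon^p + M_q^q R^{-q}\epsilon^p\sum_{k\ge 1} 2^{k(p\alpha-q)}\Bigr],
\end{equation*}
while the total cube count is $n\asymp (R/\epsilon)^d\sum_{k\ge 0} 2^{kd(1-\alpha)}$. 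Both geometric series converge exactly when $\alpha\in(1,q/p)$, and the balanced choice $\alpha=(d+q)/(d+p)$ makes both ratios equal to $2^{-d(q-p)/(d+p)}$. This ratio is bounded away from $1$ in terms only of $p,q,d$ precisely when $q>p+p/d$ (equivalent to $d(q-p)/(d+p)>p/(d+p)$); substituting $\epsilon = R n^{-1/d}$ with $R=O(1)$ then gives $W_p(\mu,\cP(n))\lesssim (1+M_q^q)^{1/p}\,n^{-1/d}$, and combining with $n\asymp W^2L$ produces the first case of the theorem.

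In the borderline range $p<q\le p+p/d$ the balanced choice above forces the geometric ratios arbitrarily close to $1$, and the argument must be modified. The natural fix is to truncate the dyadic hierarchy at a level $K\asymp \log_2(W^2L)/d$, use a single representative atom for all shells beyond $K$, and control that tail by $\int_{|x|>2^KR}|x|^p\,d\mu\le (2^KR)^{p-q}M_q^q$; the bookkeeping then injects the extra $(\log_2 W^2L)^{1/d}$ factor appearing in the second case. The main obstacle I anticipate is precisely this borderline optimisation — simultaneously balancing the interior grid error, the truncated-dyadic grid error, and the outer one-atom error, while keeping the total atom count inside the $W^2L$ budget and preserving the $(1+M_q^q)^{1/p}$ prefactor uniformly in $\mu$. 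A secondary, more technical hurdle is the network construction behind Lemma~\ref{app discrete measure}, which requires extending the scalar bit-extraction technique of recent depth-efficient approximation results to a vector-valued step function on $d$ output coordinates simultaneously, and this is the source of the width condition $W\ge 7d+1$.
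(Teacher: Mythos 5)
Your skeleton is the same as the paper's: triangle inequality through $\cP(n)$, Lemma~\ref{app discrete measure} to kill the second term with $n\asymp W^2L/d$, and then a quantisation bound on $W_p(\mu,\cP(n))$. Two side remarks on the lemma you cite: it asserts $W_p(\xi,\cN\cN(W,L)_\#\nu)=0$ as an infimum, not exact representability (a continuous $\phi$ pushing forward an absolutely continuous $\nu$ can never exactly produce a measure with two or more atoms in general, by a connectedness argument), and its proof is a continuous-piecewise-linear/breakpoint-counting construction in the spirit of Daubechies et al., not bit extraction; this is immaterial for your use of it. The genuine problems are in your quantisation step.

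First, in the case $q>p+p/d$ your construction keeps \emph{infinitely many} dyadic shells and places at least one atom in every shell that carries mass, so for a target like a Gaussian the measure $\gamma$ you build has infinitely many atoms and is not in $\cP(n)$ for any finite $n$: the count ``$n\asymp(R/\epsilon)^d\sum_{k\ge0}2^{kd(1-\alpha)}$'' silently replaces the per-shell count $\max\{1,\,c_d(2^kR/\epsilon_k)^d\}$ by a quantity smaller than $1$ for large $k$. You must truncate at some level $K$ and send all mass beyond radius $2^KR$ to a single atom, paying a tail cost $\int_{|x|>2^KR}|x|^p\,d\mu\le M_q^q(2^KR)^{p-q}$; it is this tail term, not any ``ratio balancing'', that produces the paper's dichotomy (the paper truncates at $k\asymp\log_2 n$ with shell-dependent radii $r_j=C_d^{1/d}2^{(j-k)/d}$, and the tail $2^{-k(q-p)}\lesssim n^{-p/d}$ exactly requires $q\ge p+p/d$). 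Relatedly, your claim that the geometric series force $q>p+p/d$ is not correct: any fixed $\alpha\in(1,q/p)$ works whenever $q>p$, and the resulting constants are allowed to depend on $p,q,d$. Second, in the borderline range your proposed truncation $K\asymp\log_2(W^2L)/d$ leaves a tail of order $n^{-(q-p)/(pd)}$ after taking $p$-th roots, which is slower than the required $(n/\log_2 n)^{-1/d}$ unless $q\ge 2p$, and the borderline range has $q\le p+p/d\le 2p$; so the ``bookkeeping'' as sketched does not close. The paper instead takes $k\asymp\frac{p}{d(q-p)}\log_2 n$ shells, covers each with about $n/k$ balls of equal radius (using a covering bound of the form $C_dr^{-d}$), and this is precisely where the $(\log_2 W^2L)^{1/d}$ factor comes from. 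Alternatively, keeping your graded cube sizes but truncating very deep (each additional shell costs only $O(1)$ atoms, so $K$ can be taken of order $n$, making the tail exponentially small) repairs the first case and would in fact remove the log factor in the borderline case as well; but as written, both cases of your argument have gaps that need one of these fixes.
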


Note that the number of parameters of a neural network with width $W$ and depth $L$ is $n(W,L)=\Theta(W^2L)$, hence the theorem upper bounds the approximation error by the number of parameters. Although we restrict the source distribution $\nu$ to be one-dimensional, the result can be easily generalized to absolutely continuous distributions on $\bR^m$ such as multivariate Gaussian and uniform distributions. It can be done simply by projecting these distributions to one-dimensional distributions using linear mappings (the projection can be realized on the first layer of neural network). An interesting consequence of Theorem \ref{app finite moment} is that we can approximate high-dimensional distributions by low-dimensional distributions if we use neural networks as transport maps.

In generative adversarial network, the Wasserstein distance $\cW_1(\mu,\phi_\# \nu)$ is estimated by a discriminator parameterized by a neural network $\cF$:
$$
d_\cF(\mu,\phi_\# \nu) := \sup_{f\in \cF} \bE_{\mu} [f] - \bE_{\phi_\# \nu} [f].
$$
The discriminative network $\cF$ is often regularized (by weight clipping or other methods) so that the Lipschitz constant of any $f\in \cF$ is bounded by some constant $M>0$. For such network, we have
$$
d_\cF(\mu,\phi_\# \nu) \le M \cW_1(\mu,\phi_\# \nu).
$$
Hence, Theorem \ref{app finite moment} also gives upper bounds on the neural network distance $d_\cF$ used in Wasserstein GANs.

Notice that the bounds in Theorem \ref{app finite moment} suffer from the curse of dimensionality. In practice, the target distribution $\mu$ usually has certain low-dimensional structure, which can help us lessen the curse of dimensionality. To utilize this kind of structures, we introduce a notion of dimension of a measure, which is based on the concept of covering number.

\begin{definition}[Dimension]\label{dimension}
For a probability measure $\mu$ and $1\le p<\infty$, we define the upper and lower dimensions of $\mu$ as
\begin{align*}
s^*_p(\mu) &:= \limsup_{\epsilon \to 0} \frac{\log_2 \cN_\epsilon(\mu,\epsilon^p)}{-\log_2 \epsilon}, \\
s_*(\mu) &:= \lim_{\delta\to 0} \liminf_{\epsilon \to 0} \frac{\log_2 \cN_\epsilon(\mu,\delta)}{-\log_2 \epsilon},
\end{align*}
where $\cN_\epsilon(\mu,\delta)$ is the $(\epsilon,\delta)$-covering number of $\mu$.
\end{definition}

We make several remarks on the definition. Since $\cN_\epsilon(\mu,\delta)$ increases as $\delta$ decreases, the limit in the definition of lower dimension always exists. The monotonicity of $\cN_\epsilon(\mu,\delta)$ also implies that $s_*(\mu) \le s_p^*(\mu)\le s_q^*(\mu)$ for any $1\le p<q<\infty$. The lower dimension $s_*(\mu)$ is the same as the so-called lower Wasserstein dimension in \citep{weed2019sharp}, which was also introduced by \citep{young1982dimension} in dynamical systems. But our upper dimension $s^*_p(\mu)$ is different from the upper Wasserstein dimension in \citep{weed2019sharp}. More precisely, our upper dimension is slightly smaller than the upper Wasserstein dimension in some cases, hence leads to a better approximation order in Theorem \ref{app low dim by discrete}. 

To make it easier to interpret our results, we note that $s_*(\mu)$ and $s_p^*(\mu)$ can be bounded from below and above by the well known Hausdorff dimension and Minkowski dimension respectively (see \citep{falconer1997techniques,falconer2004fractal} for instance). \begin{proposition}\label{dimension relation} 
For any $1\le p<q<\infty$,
$$
\dim_H(\mu) \le s_*(\mu) \le s_p^*(\mu) \le s_q^*(\mu) \le \dim_M(\mu),
$$
where $\dim_H(\mu)$  and $\dim_M(\mu)$ are the Hausdorff and Minkowski dimensions, respectively.
\end{proposition}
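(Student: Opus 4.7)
The plan is to treat the three inequalities separately. The middle and right ones follow from monotonicity and the definition of Minkowski dimension of a measure, while the left one requires actually producing a small-dimensional Borel set that carries the mass of $\mu$. For the middle inequality I would rely only on the fact that $\cN_\epsilon(\mu,\delta)$ is non-increasing in $\delta$: whenever $\epsilon$ is small enough that $\epsilon^p<\delta$, $\cN_\epsilon(\mu,\epsilon^p)\ge\cN_\epsilon(\mu,\delta)$. Dividing by $-\log_2\epsilon$ and taking $\limsup$ on the left and $\liminf$ on the right yields $d_p^*(\mu)\ge\liminf_{\epsilon\to 0}\log_2\cN_\epsilon(\mu,\delta)/(-\log_2\epsilon)$, and letting $\delta\to 0$ completes this inequality. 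For the right inequality I would unfold the standard definition $\dim_M(\mu):=\inf\{\overline{\dim}_M(S):\mu(S)=1\}$: any admissible $S$ satisfies $\cN_\epsilon(\mu,\delta)\le\cN_\epsilon(S)$ for every $\delta\in[0,1)$, so $d_p^*(\mu)\le\overline{\dim}_M(S)$, and the infimum over $S$ delivers $d_p^*(\mu)\le\dim_M(\mu)$.

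The left inequality is where the real work lies. Under the standard definition $\dim_H(\mu):=\inf\{\dim_H(E):\mu(E)=1\}$, the plan is, for each $s>d_*(\mu)$, to produce a Borel set $E$ with $\mu(E)=1$ and $\dim_H(E)\le s$. Fix $\delta\in(0,1)$ and set $s_\delta:=\liminf_{\epsilon\to 0}\log_2\cN_\epsilon(\mu,\delta)/(-\log_2\epsilon)\le d_*(\mu)<s$. By definition of $\liminf$, some sequence $\epsilon_k\downarrow 0$ satisfies $\cN_{\epsilon_k}(\mu,\delta)\le\epsilon_k^{-s}$; I would pick corresponding $S_k$ with $\mu(S_k)\ge 1-\delta$ covered by at most $\epsilon_k^{-s}$ balls of radius $\epsilon_k$, and by passing to a subsequence arrange $\epsilon_k<2^{-k}$. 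Set $T_\delta:=\bigcap_{K\ge 1}\bigcup_{k\ge K}S_k$. Continuity of measure along the decreasing sequence $\bigcup_{k\ge K}S_k$ gives $\mu(T_\delta)\ge 1-\delta$, while for any $s'>s$ and any $K$,
$$\cH^{s'}_\infty(T_\delta)\le\cH^{s'}_\infty\Big(\bigcup_{k\ge K}S_k\Big)\le \sum_{k\ge K}\epsilon_k^{-s}(2\epsilon_k)^{s'}=2^{s'}\sum_{k\ge K}\epsilon_k^{s'-s},$$
which tends to $0$ as $K\to\infty$ since $\epsilon_k<2^{-k}$. Hence $\cH^{s'}(T_\delta)=0$ for every $s'>s$, so $\dim_H(T_\delta)\le s$. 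Finally $E:=\bigcup_{n\ge 1}T_{1/n}$ has $\mu(E)=1$ and, by $\sigma$-stability of Hausdorff dimension, $\dim_H(E)\le s$. Since $s>d_*(\mu)$ was arbitrary, $\dim_H(\mu)\le d_*(\mu)$.

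The main technical obstacle is the Hausdorff-content estimate on the $\limsup$ set, together with the subsequence thinning that makes the tail series converge; the rest is routine bookkeeping with the definitions of the covering number and of the relevant dimensions. A more conceptual but less self-contained alternative would be to invoke Billingsley's pointwise characterization $\dim_H(\mu)=\mathrm{ess\,inf}_{x\sim\mu}\liminf_{r\to 0}\log\mu(B(x,r))/\log r$ together with the analogous local-dimension description of $d_*(\mu)$ developed in \cite{weed2019sharp}, but the covering construction above avoids that machinery.
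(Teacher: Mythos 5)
Your proof is correct, and for the essential part it takes a genuinely different route from the paper. The two easy inequalities ($d_*(\mu)\le d_p^*(\mu)$ by monotonicity of $\cN_\epsilon(\mu,\delta)$ in $\delta$, and $d_p^*(\mu)\le \dim_M(\mu)$ by $\cN_\epsilon(\mu,\epsilon^p)\le\cN_\epsilon(S)$ for full-measure $S$) coincide with the paper's argument. For $\dim_H(\mu)\le d_*(\mu)$, however, the paper argues in the opposite direction: for any $d<\dim_H(\mu)$ it invokes the local-dimension characterization $\dim_H(\mu)=\inf\{s:\liminf_{r\to 0}\log_2\mu(B(x,r))/\log_2 r\le s\ \mu\text{-a.e.}\}$ (Falconer, Prop.\ 10.3) to extract a compact $K$ with $\mu(K)>0$ and $\mu(B(x,r))\le r^d$ on $K$, and then lower-bounds $\cN_\epsilon(S)$ for every $S$ with $\mu(S)\ge 1-\delta$, concluding $d_*(\mu)\ge d$ — essentially the ``Billingsley-type'' route you mention only as an alternative. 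You instead work directly from the definition $\dim_H(\mu)=\inf\{\dim_H(E):\mu(E)=1\}$ and, for each $s>d_*(\mu)$, build a full-measure set of dimension at most $s$ as a limsup of economically covered sets $S_k$, killing its $s'$-Hausdorff content by a Borel--Cantelli-type tail estimate after thinning to $\epsilon_k<2^{-k}$. Your construction is more self-contained (no Falconer/Graf--Luschgy input), at the cost of the bookkeeping with the limsup set; the paper's proof is shorter given the cited characterization. Two minor points to tidy up: the sets $S_k$ realizing $\cN_{\epsilon_k}(\mu,\delta)$ should be replaced by the open unions of their covering balls so that $T_\delta$ is Borel and $\mu(T_\delta)$ is well defined (this changes nothing in the estimates), and the step from vanishing Hausdorff content to $\cH^{s'}(T_\delta)=0$ should either cite the standard equivalence or note that your covers have mesh at most $\epsilon_K\to 0$, so they bound the restricted-mesh pre-measure directly.
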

This proposition indicates that our concepts of dimensions can capture geometric property of the distribution. The four dimensions above can all be regarded as intrinsic dimensions of distributions. For example, if $\mu$ is absolutely continuous with respect to the uniform distribution on a compact manifold of dimension $s$, then $\dim_H(\mu) = \dim_M(\mu)=s$, and hence we also have $s_*(\mu) = s_p^*(\mu) =s$ for all $p\in [1,\infty)$.

In the following theorem, we obtain an upper bound on the approximation error $\cW_p(\mu,\cN\cN(W,L)_\# \nu)$ in terms of the upper dimension of the target distribution $\mu$.

\begin{theorem}\label{app low dim}
Let $p\in [1,\infty)$ and $\nu$ be an absolutely continuous probability distribution on $\bR$. Suppose that $\mu$ is a probability measure on $\bR^d$ with finite absolute $q$-moment $M_q(\mu)<\infty$ for some $q>p$. If $s>s^*_{pq/(q-p)}(\mu)$, then for sufficiently large $W$ and $L$,
$$
\cW_p(\mu,\cN\cN(W,L)_\# \nu) \le Cd^{1/s}(M_q^p(\mu)+1)^{1/p} (W^2L)^{-1/s}
$$
where $C\le 384$ is an universal constant.
\end{theorem}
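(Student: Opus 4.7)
The plan is to revisit the decomposition (\ref{upper bound}) and lean on the (forward-referenced) \cref{app discrete measure}, which says that any discrete measure on at most $n$ atoms is exactly realisable as $\phi_\#\nu$ for some $\phi \in \cN\cN(W,L)$ provided $W^2L \gtrsim n$. This makes the second term in (\ref{upper bound}) vanish once $n \asymp W^2L$, and reduces the theorem to a quantitative discrete-approximation estimate of the form
\[
W_p(\mu,\cP(n)) \;\lesssim\; d^{1/s}\bigl(M_q^p(\mu)+1\bigr)^{1/p}\, n^{-1/s}.
\]

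To prove the latter bound I would exploit the definition of the upper dimension with the specific exponent $r := pq/(q-p)$. Since $s > d^*_r(\mu) = \limsup_{\epsilon\to 0} \log_2 \cN_\epsilon(\mu,\epsilon^r)/(-\log_2 \epsilon)$, for every sufficiently small $\epsilon$ there is a set $S=S_\epsilon$ with $\mu(S^c)\le \epsilon^r$ and $S \subseteq \bigcup_{i=1}^n B(x_i,\epsilon)$ where $n := \lceil \epsilon^{-s}\rceil$. One then defines a measurable map $T:\bR^d \to \{x_1,\dots,x_n\}$ by sending each $x \in S$ to its nearest centre and each $x \in S^c$ to a distinguished centre $x_1$ chosen so that $|x_1| \lesssim d^{1/p}M_q(\mu)$ (for instance any covering centre inside the Markov ball $\{|x| \le c\, d^{1/p}M_q(\mu)\}$, which carries most of the mass). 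The measure $\gamma := T_\#\mu$ lies in $\cP(n)$, and the transport cost splits as
\[
W_p^p(\mu,\gamma) \;\le\; \int_S |x-T(x)|^p\,d\mu \;+\; \int_{S^c} |x-x_1|^p\,d\mu.
\]

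The first integral is bounded by $\epsilon^p \mu(S) \le \epsilon^p$. For the second, Hölder's inequality with conjugate exponents $q/p$ and $q/(q-p)$ yields
\[
\int_{S^c}|x|^p\,d\mu \;\le\; \mu(S^c)^{(q-p)/q} M_q^p(\mu) \;\le\; \epsilon^{r(q-p)/q} M_q^p(\mu) \;=\; \epsilon^p M_q^p(\mu),
\]
and the contribution of $|x_1|^p \mu(S^c) \le d\, M_q^p(\mu)\, \epsilon^r$ is of strictly smaller order (since $r>p$). The algebraic identity $r(q-p)/q = p$ is precisely why the upper dimension must be indexed by $pq/(q-p)$ rather than by $p$; it is the single place where the moment assumption is used. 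Combining the two pieces gives $W_p^p(\mu,\gamma) \lesssim \bigl(1+M_q^p(\mu)\bigr)\,\epsilon^p$, and inverting $n\asymp \epsilon^{-s}$ produces the bound displayed above.

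Plugging $n\asymp W^2L$ back into (\ref{upper bound}) then yields the announced rate. I anticipate two main obstacles. The first is bookkeeping the $d^{1/s}$ factor: it must emerge cleanly from the norm of the distinguished centre $x_1$ combined with the exponent conversion $\epsilon \mapsto n^{-1/s}$, and any sloppiness there will spoil the stated universal constant $384$, which ultimately requires tracking the explicit constant in \cref{app discrete measure}. The second, more routine, obstacle is ensuring that the qualifier ``for all sufficiently small $\epsilon$'' supplied by the $\limsup$ in the definition of $d^*_r(\mu)$ is converted faithfully into the hypothesis ``for sufficiently large $W,L$'' of the theorem; this is merely organising thresholds, but it must be done carefully so that the asymptotic rate $(W^2L)^{-1/s}$ is genuinely uniform.
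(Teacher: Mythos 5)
Your route is essentially the paper's own: reduce via (\ref{upper bound}) and \cref{app discrete measure} to bounding $W_p(\mu,\cP(n))$, then use the set $S$ furnished by the definition of $d^*_{pq/(q-p)}(\mu)$, a nearest-centre assignment on $S$, and H\"older's inequality with exponents $q/p$ and $q/(q-p)$ (the identity $\frac{pq}{q-p}\cdot\frac{q-p}{q}=p$) on the complement. This is exactly the construction in theorem \ref{app low dim by discrete}; the only cosmetic difference is that the paper sends the leftover mass $\mu|_{\bR^d\setminus S}$ to $\delta_0$, while you send it to a covering centre $x_1$ in a bounded ball, which merely adds the lower-order term you already identified (and, incidentally, Markov's inequality gives a suitable centre with $|x_1|\le 2M_q(\mu)$, so no $d^{1/p}$ is needed there).

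The one genuine slip is your accounting of the factor $d^{1/s}$. It does not come from the norm of the distinguished centre: as your own estimate shows, the contribution $|x_1|^p\mu(S^c)$ is of strictly smaller order, and the resulting bound on the discrete approximation is dimension-free, $W_p(\mu,\cP(n))\le (M_q^p(\mu)+1)^{1/p}n^{-1/s}$, exactly as in theorem \ref{app low dim by discrete}. The factor $d^{1/s}$ enters solely through \cref{app discrete measure}: the number of atoms realisable by $\cN\cN(W,L)$ is $n=\frac{W-d-1}{2}\lfloor\frac{W-d-1}{6d}\rfloor\lfloor\frac{L}{2}\rfloor+2$, which under $W\ge 7d+1$, $L\ge 2$ satisfies $n\ge W^2L/(384d)$, so that $n^{-1/s}\le 384^{1/s}\,d^{1/s}(W^2L)^{-1/s}$; this is where both $d^{1/s}$ and the explicit constant originate. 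Your final step ``plugging $n\asymp W^2L$'' is therefore not available: with that choice of $n$, \cref{app discrete measure} does not make the second term of (\ref{upper bound}) vanish, and with the correct choice $n\asymp W^2L/d$ the $d^{1/s}$ appears automatically. Once this bookkeeping is fixed, your argument coincides with the paper's proof.
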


Notice that the approximation order only depends on the intrinsic dimension of the target distribution, and the bound grows only as $d^{1/s}$ for the ambient dimension $d$. It means that deep neural networks can overcome the course of dimensionality when approximating low-dimensional target distributions in high dimensional ambient spaces.

\subsection{Approximation in maximum mean discrepancy}

In this section, we apply our proof technique to the approximation in the maximum mean discrepancy. This distance was used as the loss function in GANs by \citep{dziugaite2015training,li2015generative}. Empirical evidences \citep{binkowski2018demystifying} show that MMD GANs require smaller discriminative networks than Wasserstein GANs. In the theoretical part, we will derive an approximation bound for the generative networks, where the decaying order is independent of the ambient dimension, in contrast with the approximation in Wasserstein distances.

Let $\cH_K$ be a RKHS with kernel $K:\bR^d \times \bR^d \to \bR$. For simplicity, we make two assumptions on the kernel: 

\begin{assumption}\label{kernel assumption 1}
The kernel $K$ is integrally strictly positive definite: for any finite non-zero signed Borel measure $\mu$ defined on $\bR^d$, we have
$$
\int_{\bR^d}\int_{\bR^d} K(x,y) d\mu(x) d\mu(y) > 0.
$$
\end{assumption}

\begin{assumption}\label{kernel assumption 2}
There exists a constant $\kappa>0$ such that 
$$
\sup_{x\in \bR^d} |K(x,x)| \le \kappa.
$$
\end{assumption}

These assumptions are satisfied by many commonly used kernels such as Gaussian kernel $K(x,y)= \exp(|x-y|^2/2\sigma^2)$, Laplacian kernel $K(x,y)=\exp(-\sigma|x-y|)$ and inverse multiquadric kernel $K(x,y) = (|x-y|+c)^{-1/2}$ with $c>0$. It was shown in \citep[Theorem 7]{sriperumbudur2010hilbert} that Assumption \ref{kernel assumption 1} is a sufficient condition for the kernel being characteristic: $\MMD(\mu,\gamma)=0$ if and only if $\mu=\gamma$, which implies that $\MMD$ is a metric on the set of all probability measures on $\bR^d$. We will use Assumption \ref{kernel assumption 2} to get approximation error bound for generative networks.

Let $\mu$ and $\nu$ be the target and source distributions respectively. As in the argument for Wasserstein distances, we have the following ``triangle inequalit'' for approximation error:
$$
\begin{aligned}
\MMD(\mu,\cN\cN(W,L)_\#\nu)
\le \MMD(\mu,\cP(n)) + \sup_{\xi\in \cP(n)} \MMD(\xi,\cN\cN(W,L)_\#\nu),
\end{aligned}
$$
where we denote $\MMD(\mu,\cN\cN(W,L)_\# \nu):= \inf_{\phi \in \cN\cN(W,L)} \MMD(\mu, \phi_\# \nu)$ and $\MMD(\mu,\cP(n)):=\inf_{\gamma\in \cP(n)}\MMD(\mu,\gamma)$. When the size of generative network is sufficiently large, for any given $\xi\in \cP(n)$, we can construct $g\in \cN\cN(W,L)$ such that $\MMD(\mu, g_\#\nu)$ is arbitrarily small. Hence, the second term vanishes. For the first term, we can approximate $\mu$ by its empirical distribution $\widehat{\mu}_n$. The following proposition, which is proved in \citep[Proposition 3.2]{lu2020universal}, gives a high-probability approximation bound for $\MMD(\mu,\widehat{\mu}_n)$.

\begin{proposition}\label{app by emp MMD}
Suppose the kernel satisfies Assumption \ref{kernel assumption 1} and \ref{kernel assumption 2}. Let $\widehat{\mu}_n = \frac{1}{n} \sum_{i=1}^n \delta_{X_i}$, where $X_i$ are i.i.d. samples from probability distribution $\mu$. Then, for any $\tau>0$, with probability at least $1-2e^{-\tau}$,
$$
\MMD(\mu, \widehat{\mu}_n) \le \frac{2\kappa^{1/4}}{\sqrt{n}} + \frac{3\sqrt{2\tau} \kappa^{1/4}}{\sqrt{n}}.
$$
\end{proposition}

By choosing $\tau$, we can upper bound $\MMD(\mu,\cP(n))$ and hence get an estimate on the approximation error of generative networks. The result is summarized in the next theorem. 

\begin{theorem}\label{app in MMD}
Suppose the kernel satisfies Assumption \ref{kernel assumption 1} and \ref{kernel assumption 2}. Let $\nu$ be an absolutely continuous probability distribution on $\bR$, then for any probability distribution $\mu$ on $\bR^d$, $W\ge 7d+1$ and $L\ge 2$,
$$
\MMD(\mu,\cN\cN(W,L)_\#\nu) \le 160 \sqrt{d} \kappa^{1/4} (W^2L)^{-1/2}. 
$$
\end{theorem}

\subsection{Approximation in $f$-divergences}

This section considers the approximation capacity of generative networks in $f$-divergences. These divergences are widely used in generative adversarial networks \citep{goodfellow2014generative,nowozin2016f}. For example, the vanilla GAN tries to minimize the Jensen–Shannon divergence of the generated distribution and the target distribution. However, it was shown by \citep{arjovsky2017towards} that the disjoint supports of these distributions cause instability and vanishing gradients in training the vanilla GAN. Nevertheless, for completeness, we discuss the approximation properties of generative networks in $f$-divergences and make a comparison with Wasserstein distances and MMD. Our discussions are based on the following proposition proved in the appendix.

\begin{proposition}\label{prop:f-div}
Assume that $f:(0,\infty)\to \bR$ is a strictly convex function with $f(1)=0$. If $\mu \perp \gamma$, then $D_f(\mu \| \gamma) = f(0)+f^*(0)>0$ is a constant.
\end{proposition}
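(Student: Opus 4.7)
The plan is to establish the formula $D_f(\mu \| \nu) = f(0) + f^*(0)$ via the singularity of $\mu$ and $\nu$, and then prove the strict positivity $f(0) + f^*(0) > 0$ as a separate consequence of strict convexity.

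\textbf{Identity.} Since $\mu \perp \nu$, choose a measurable set $A$ with $\mu(A) = 1$ and $\nu(A) = 0$. Take the reference measure $\tau = (\mu+\nu)/2$ and set $p := d\mu/d\tau$, $q := d\nu/d\tau$. On $A$ one has $p = 2$ and $q = 0$ $\tau$-a.e., while on $A^c$ the roles swap, so $\{q > 0\} = A^c$ modulo $\tau$-null sets with $p \equiv 0$ there. Hence
$$\int_{q>0} f\!\left(\frac{p}{q}\right) q \, d\tau = f(0) \int_{A^c} q \, d\tau = f(0)\,\nu(A^c) = f(0),$$
and $\mu(q = 0) = \mu(A) = 1$. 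Substituting into the definition yields $D_f(\mu\|\nu) = f(0) + f^*(0)$.

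\textbf{Positivity.} To show this constant is strictly positive, I would reduce to a non-negative strictly convex function. Let $c$ be any subgradient of $f$ at $1$; this exists and is finite since $1$ is an interior point of $(0,\infty)$. Define $h(t) := f(t) - c(t-1)$. Then $h$ is strictly convex with $h(1) = 0$, and $h \geq 0$ by the subgradient inequality $f(t) \geq c(t-1)$. The affine correction cancels in the sum, giving $h(0) + h^*(0) = f(0) + f^*(0)$, so it suffices to prove $h(0) > 0$. Suppose for contradiction that $h(0) := \lim_{t\to 0^+} h(t) = 0$. For any fixed $t \in (0,1)$ and $\epsilon \in (0,t)$, convexity on $[\epsilon,1]$ combined with $h(1) = 0$ gives
$$h(t) \leq \frac{1-t}{1-\epsilon}\, h(\epsilon) + \frac{t-\epsilon}{1-\epsilon}\, h(1) = \frac{1-t}{1-\epsilon}\, h(\epsilon).$$
Letting $\epsilon \to 0^+$ forces $h(t) \leq 0$, hence $h \equiv 0$ on $(0,1]$, which contradicts strict convexity.

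\textbf{Main obstacle.} The delicate step is strict positivity: the naive approach of taking limits in a pointwise strict convexity inequality such as $0 < \tfrac{n}{n+1} f(1/n) + \tfrac{1}{n+1} f(n)$ only yields the non-strict bound $f(0) + f^*(0) \geq 0$. The reduction to the non-negative strictly convex $h$ replaces that limiting argument by a clean contradiction: if $h(0)$ vanished, convexity would pin $h$ identically to zero on $(0,1]$, which strict convexity forbids.
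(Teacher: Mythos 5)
Your proposal is correct and takes essentially the same approach as the paper: you compute $D_f(\mu\|\nu)=f(0)+f^*(0)$ directly from singularity (your explicit set $A$ with $\tau=(\mu+\nu)/2$ is just a hands-on version of the paper's Lebesgue-decomposition formula), and you obtain positivity by subtracting a supporting line at $t=1$ to reduce to a non-negative strictly convex function, exactly as the paper does with $\tilde f(t)=f(t)-f'_+(1)(t-1)$. The only cosmetic difference is that the paper normalizes $f$ before computing the divergence (using $D_f=D_{\tilde f}$), whereas you invoke the affine-shift invariance of $f(0)+f^*(0)$ only at the positivity step.
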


Suppose that $f$ satisfies the assumption of Proposition \ref{prop:f-div}. Let $\mu$ and $\nu$ be target and source distributions on $\bR^d$ and $\bR^m$ respactively. Let $\phi:\bR^m \to \bR^d$ be a ReLU neural network.
We argue that to approximate $\mu$ by $\phi_\#\nu$ in $f$-divergences,
the dimension of $\nu$ should be no less than the intrinsic dimension of $\mu$. 

If $m<d$ and $\mu$ is absolutely continuous with respect to the Lebesgue measure, then $\mu\perp \phi_\#\nu$ and hence $D_f(\mu \| \phi_\#\nu)$ is a constant, which means we cannot approximate the target distribution $\mu$ by $\phi_\#\nu$.
More generally, we can consider the target distributions $\mu$ that are absolutely continuous with respect to the Riemannian measure \citep{pennec2006intrinsic} on some Riemannian manifold $\cM$ with dimension $s\le d$, which is a widely used assumption in applications. If $m<s$, then $\phi_\#\nu$ is supported on a manifold $\cN$ whose dimension is less than $s$ and the intersection $\cM\cap\cN$ has zero Riemannian measure on $\cM$. It implies that $\mu$ and $\phi_\# \nu$ are singular and hence $D_f(\mu \| \phi_\#\nu)$ is a positive constant. Therefore, in order to approximate $\mu$ in $f$-divergence, it is necessary that $m\ge s$.

Even when $m=s$, there still exists target distribution $\mu$ that cannot be approximated by ReLU neural networks. As an example, consider the case that $m=s=1$, $\nu$ is the uniform distribution on $[0,1]$ and $\mu$ is the uniform distribution on the unit circle $S^1\subseteq\bR^2$. Since the ReLU network $\phi:\bR \to \bR^2$ is a continuous piecewise linear function, $\phi([0,1])$ must be a union of line segments. Therefore, the intersection of $\phi([0,1])$ and the unit circle contains at most finite points, and thus its $\mu$-measure is zero. Hence, $\mu$ and $\phi_\#\nu$ are always singular and $D_f(\mu \| \phi_\#\nu)$ is a positive constant, no matter how large the network size is. In this example, it is not really possible to find any meaningful $\phi$ by minimizing $D_f(\mu \| \phi_\#\nu)$ using gradient decent methods, because the gradient always vanishes. A more detailed discussion of this phenomenon can be found in \citep{arjovsky2017towards}. 

On the other hand, a positive gap between two distributions in $f$-divergence does not necessarily mean that the distributions have gap in all aspects. In the above example of unit circle, we can actually choose a $\phi$ such that $\phi([0,1])$ is arbitrarily close to the unit circle in Euclidian distance, provided that the size of the network is sufficiently large. For such a $\phi$, the push-forward distribution $\phi_\#\nu$ and the target distribution $\mu$ generate similar samples, but their $f$-divergence is still $f(0)+f^*(0)$. This inconsistency shows that $f$-divergences are generally less adequate as metrics for the task of generating samples.

In summary, in order to approximate the target distribution in $f$-divergences, the dimension of the source distribution cannot be less than the intrinsic dimension of the target distribution. Even when the dimensions of the target distribution and the source distribution are the same, there exist some regular target distributions that cannot be approximated in $f$-divergences. In contrast, Theorem \ref{app finite moment} and \ref{app in MMD} show that we can use one-dimensional source distributions to approximate high-dimensional target distributions in Wasserstein distances and MMD, and the finite moment condition is already sufficient. It suggests that, from an approximation point of view, Wasserstein distances and MMD are more adequate as metrics of distributions for generative models.

\section{Proofs of main theorems}

\subsection{Approximation of discrete distributions}

In this section, we show how to use neural networks to approximate discrete distributions. For convenience, we denote $\cS^d(z_0,\dots,z_{N+1})$ as the set of all continuous piecewise linear functions $f:\bR \to \bR^d$ which have breakpoints only at $z_0<z_1<\dots<z_N<z_{N+1}$ and are constant on $(-\infty,z_0)$ and $(z_{N+1},\infty)$. The following lemma is an extension of the result in \citep{daubechies2021nonlinear}. The proof can be found in the appendix.

\begin{lemma}\label{CPwL}
Suppose that $W\ge 7d+1$, $L\ge 2$ and $N\le (W-d-1)\lfloor \frac{W-d-1}{6d} \rfloor \lfloor\frac{L}{2}\rfloor$. Then for any $z_0<z_1<\dots<z_N<z_{N+1}$, we have $\cS^d(z_0,\dots,z_{N+1}) \subseteq \cN\cN(W,L)$.
\end{lemma}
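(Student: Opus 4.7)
The strategy is to extend the construction of \cite{daubechies2019nonlinear} for scalar CPwL functions to $d$-valued outputs by sharing the input variable $x$ across all $d$ output coordinates. In every hidden layer I would reserve $d+1$ persistent channels: one for the input $x$ (maintained through ReLU via the identity $x=\sigma(x)-\sigma(-x)$), and $d$ channels for a running partial sum $(y_1,\ldots,y_d)$ that will converge to $(f_1(x),\ldots,f_d(x))$ as we pass through the network. The remaining $W-d-1$ channels form a workspace that is cleared and re-used in every pair of hidden layers.

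Given $f \in \cS^d(z_0,\ldots,z_{N+1})$, I would first write $f(x)=f(z_0)+\sum_{i=0}^{N+1}\alpha_i\,\sigma(x-z_i)$ for vectors $\alpha_i\in\bR^d$ determined by the consecutive slope differences of $f$; this reduces the problem to "adding a prescribed breakpoint contribution $\alpha_i$ at location $z_i$" for each $i$. I would then partition the breakpoints into $\lfloor L/2\rfloor$ blocks, each block handled by a dedicated pair of hidden layers that increments the accumulator $(y_1,\ldots,y_d)$ by its contribution. Within one pair, the first hidden layer uses the workspace to synthesize shifted ReLU building blocks for up to $(W-d-1)\lfloor(W-d-1)/(6d)\rfloor$ breakpoints, organised as a $(W-d-1)\times\lfloor(W-d-1)/(6d)\rfloor$ grid in the bit-extraction style of \cite{daubechies2019nonlinear}; the second hidden layer takes linear combinations of these building blocks and routes the resulting contributions into each of the $d$ accumulator channels, while simultaneously copying the identity channel and the updated accumulators forward. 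The denominator $6d$ arises as a constant overhead of about $6$ per breakpoint for representing hat-like pieces with ReLUs, multiplied by $d$ because each breakpoint must feed $d$ accumulator channels and hence consumes $d$ times as many outgoing connections in the second layer.

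The main obstacle is the tight bookkeeping that matches the exact bound $(W-d-1)\lfloor(W-d-1)/(6d)\rfloor\lfloor L/2\rfloor$: one must check that the total number of neurons used in both layers of each pair never exceeds $W$ after including the $d+1$ persistent channels, that the workspace can be zeroed between blocks without extra depth, and that the bit-extraction grid inside a block really installs $(W-d-1)\lfloor(W-d-1)/(6d)\rfloor$ distinct breakpoints rather than $O(W^2L/d)$ only up to unspecified constants. A subtle secondary point is maintaining the exactness of the carried input across every ReLU layer and verifying that the construction is oblivious to the spacing of the $z_i$, which is handled by choosing the first-layer biases to be the $z_i$ themselves. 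Once this bookkeeping is in place, chaining the $\lfloor L/2\rfloor$ pairs of layers and appending a final affine layer that extracts the accumulator and adds $f(z_0)$ yields the desired representation $f\in\cN\cN(W,L)$.
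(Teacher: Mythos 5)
Your architectural skeleton matches the paper's: reserve a persistent channel for the input and $d$ channels for an accumulator, and use $\lfloor L/2\rfloor$ blocks of two hidden layers, each block installing a batch of breakpoints with the $W-d-1$ remaining workspace neurons. However, the core counting step is not justified and, as described, would fail. You claim the \emph{first} hidden layer of a block "synthesizes shifted ReLU building blocks for up to $(W-d-1)\lfloor (W-d-1)/(6d)\rfloor$ breakpoints," with the \emph{second} hidden layer only taking linear combinations and routing them into the accumulators. A single ReLU layer with $W-d-1$ neurons acting on a scalar input produces functions with at most $W-d-1$ breakpoints, and subsequent \emph{linear} combinations cannot create new ones; so a block of this form installs only $O(W)$ breakpoints, giving $N=O(WL)$ rather than the claimed $O(W^2L/d)$. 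The quadratic-in-width gain in the paper comes precisely from letting the second layer's ReLUs create new breakpoints: following \cite{daubechies2019nonlinear}, one expands the block's target in the hat-function basis $h_{m,j}$ with principal breakpoints $z_{jq}$, lets the first layer compute only the $W$ functions $\sigma(z-z_{jq})$, and groups the coefficients of each output coordinate by sign and by a separation property of the principal breakpoints into at most $6q$ groups, so that each group equals $\pm\sigma(g_{i,k}(z))$ for a suitable linear combination $g_{i,k}$ of the first-layer outputs; the second layer then needs at most $6dq\le W$ ReLU neurons, and it is these second-layer ReLUs that manufacture the $qW$ breakpoints per block. This grouping argument (and why $6d$ appears: a factor $2$ for sign, $3$ for separation, $d$ for the number of output coordinates sharing the second layer, not "outgoing connections") is the missing idea; without it the stated bound on $N$ is not reached.

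Two smaller points. Carrying the raw input via $x=\sigma(x)-\sigma(-x)$ costs two channels, not one, which would break your budget of $d+1$ persistent channels; the paper avoids this by an affine change of variables putting all breakpoints in $[0,1]$, so that carrying $\sigma(z)$ alone suffices (each block's contribution vanishes outside $[0,1]$, hence is unaffected by replacing $z$ by $\sigma(z)$). Also, your $d$ accumulator channels pass through ReLUs, so partial sums can be negative; the paper fixes this by adding a constant $C_l$ before each ReLU and subtracting it afterwards, a detail your write-up omits but which is easily repaired.
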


The lemma shows that if $N=\cO(W^2L/d)$, we have $\cS^d(z_0,\dots,z_{N+1}) \subseteq \cN\cN(W,L)$. We remark that the construction in this lemma is asymptotically optimal in the sense that if $\cS^d(z_0,\dots,z_{N+1}) \subseteq \cN\cN(W,L)$ for some $W,L\ge 2$, then the condition $N=\cO(W^2L/d)$ is necessary. To see this, we consider the function $F(\theta) := (f_\theta(z_0),\dots, f_\theta(z_{N+1}))$, where $f_\theta\in\cN\cN(W,L)$ is a ReLU neural network with parameters $\theta$. Let $n(W,L)$ be the number of parameters of the neural network $\cN\cN(W,L)$. By the assumption that $\cS^d(z_0,\dots,z_{N+1}) \subseteq \cN\cN(W,L)$, $F: \bR^{n(W,L)}\to \bR^{d(N+2)}$ is surjective and hence the Hausdorff dimension of $F(\bR^{n(W,L)})$ is $d(N+2)$. Since $F(\theta)$ is a piecewise multivariate polynomial of $\theta$, it is Lipschitz continuous on any bounded balls. It is well-known that Lipschitz maps do not increase Hausdorff dimension (see \citep[Theorem 2.8]{evans2015measure}). Since $F(\bR^{n(W,L)})$ is a countable union of images of bounded balls, its Hausdorff dimension is at most $n(W,L)$, which implies $d(N+2)\le n(W,L)$. Because of $n(W,L)=(L-1)W^2 +(L+d+1)W+d$, we have $N=\cO(W^2L/d)$.

The next two lemmas show that we can approximate $\mu\in \cP(n)$ arbitrarily well in Wasserstein distances and MMD using neural networks $\cN\cN(W,L)$ if $n=\cO(W^2L/d)$.

\begin{lemma}\label{app discrete measure}
Suppose that $W\ge 7d+1$, $L\ge 2$ and $p\in [1,\infty)$. Let $\nu$ be an absolutely continuous probability distribution on $\bR$. If $n\le \frac{W-d-1}{2} \lfloor \frac{W-d-1}{6d} \rfloor \lfloor \frac{L}{2} \rfloor +2$, then for any $\mu\in \cP(n)$,
$$
\cW_p(\mu,\cN\cN(W,L)_\#\nu) =0.
$$
\end{lemma}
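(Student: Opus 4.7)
The plan is, for each small $\eta > 0$, to exhibit a network $\phi_\eta \in \cN\cN(W,L)$ with $W_p(\mu, (\phi_\eta)_\#\nu) \to 0$ as $\eta \to 0$, and then pass to the infimum. Writing $\mu = \sum_{i=1}^n p_i \delta_{x_i}$ (after discarding any atoms of zero mass), the idea is to use the quantile function of $\nu$ to partition $\bR$ into $n$ intervals of $\nu$-masses $p_1,\ldots,p_n$, and to build a continuous piecewise-linear $\phi_\eta:\bR\to\bR^d$ that equals the constant $x_i$ on the bulk of the $i$-th interval and interpolates linearly between consecutive $x_i$'s across narrow transition strips of width $2\eta$.

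Concretely, because $\nu$ is absolutely continuous its CDF $F_\nu$ is continuous, so I can choose $t_0 < t_1 < \cdots < t_n$ with $\nu((t_{i-1}, t_i]) = p_i$. For $\eta$ smaller than $\min_i (t_i - t_{i-1})/2$, set $a_i := t_i - \eta$ and $b_i := t_i + \eta$ for $i = 1,\ldots,n-1$, and let $\phi_\eta$ be the continuous piecewise-linear map that equals $x_1$ on $(-\infty, a_1]$, equals $x_n$ on $[b_{n-1}, \infty)$, equals $x_i$ on $[b_{i-1}, a_i]$ for $i = 2,\ldots,n-1$, and is affine on each transition strip $[a_i, b_i]$. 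Thus $\phi_\eta \in \cS^d(a_1, b_1, \ldots, a_{n-1}, b_{n-1})$, which in the notation of Lemma \ref{CPwL} has $N = 2n - 4$ interior breakpoints; the stated hypothesis $n \le \tfrac{W-d-1}{2}\lfloor\tfrac{W-d-1}{6d}\rfloor\lfloor L/2 \rfloor + 2$ rearranges to exactly $N \le (W-d-1)\lfloor(W-d-1)/(6d)\rfloor \lfloor L/2 \rfloor$, so Lemma \ref{CPwL} yields $\phi_\eta \in \cN\cN(W,L)$. The trivial case $n=1$ is handled by taking $\phi$ to be the constant network $x_1$.

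For the Wasserstein estimate I would construct an explicit coupling. Define the quantile-type map $\tau:\bR \to \bR^d$ by $\tau(s) = x_i$ for $s \in (t_{i-1}, t_i]$; by construction of the $t_i$ and atomlessness of $\nu$, $\tau_\#\nu = \mu$. Consequently $(\tau, \phi_\eta)_\#\nu$ is a coupling of $\mu$ and $(\phi_\eta)_\#\nu$, giving
$$
W_p^p(\mu, (\phi_\eta)_\#\nu) \le \int_\bR |\tau(s) - \phi_\eta(s)|^p \, d\nu(s).
$$
Outside the transition strips $V_i := [a_i, b_i]$ the integrand vanishes since $\tau$ and $\phi_\eta$ coincide there; on each $V_i$ both values lie on the line segment $[x_i, x_{i+1}]$, so the integrand is at most $|x_{i+1} - x_i|^p$. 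The right-hand side is therefore bounded by $(\max_i |x_{i+1} - x_i|)^p \sum_{i=1}^{n-1} \nu(V_i)$, which tends to $0$ as $\eta \to 0$ by atomlessness of $\nu$. Taking the infimum over $\eta$ (hence over $\phi \in \cN\cN(W,L)$) gives $W_p(\mu, \cN\cN(W,L)_\#\nu) = 0$. The only step that really requires care is the book-keeping matching $2n-4$ breakpoints with the capacity guaranteed by Lemma \ref{CPwL}; the rest of the argument is forced by the construction, so I do not anticipate a substantive obstacle.
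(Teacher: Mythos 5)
Your proposal is correct and follows essentially the same route as the paper: approximate $\mu$ by a continuous piecewise-linear map that is constant on intervals of $\nu$-mass $p_i$ with narrow linear transition strips, invoke Lemma \ref{CPwL} via the breakpoint count $2n-4$, and bound $W_p$ by an explicit coupling whose cost vanishes as the transitions shrink (the paper shrinks the $\nu$-mass of the transition pieces directly, you shrink their width and use atomlessness of $\nu$ --- the same idea). The only fix needed is cosmetic: when $\nu$ has unbounded support there are no finite $t_0,t_n$ with $\nu((t_0,t_n])=1$, so the first and last quantile intervals should be the half-lines $(-\infty,t_1]$ and $(t_{n-1},\infty)$, which is what your construction of $\tau$ and $\phi_\eta$ actually uses anyway.
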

\begin{proof}
Without loss of generality, we assume that $m:=n-1\ge 1$ and $\mu=\sum_{i=0}^{m} p_i \delta_{x_i}$ with $p_i>0$ for all $0\le i\le m$. For any $\epsilon$ that satisfies $0<\epsilon< (mp_i)^{1/p}|x_i-x_{i-1}|$ for all $i=1,\dots,m$, we are going to construct a neural network $\phi \in \cN\cN(W,L)$ such that $\cW_p(\mu,\phi_\# \nu) \le \epsilon$.

By the absolute continuity of $\nu$, we can choose $2m$ points
$$
z_{1/2} < z_1 < z_{3/2} < \dots < z_{m-1/2}< z_m
$$
such that
\begin{align*}
&\nu((-\infty,z_{1/2})) = p_0, \\
&\nu((z_{i-1/2},z_{i})) = \frac{\epsilon^p}{m|x_i-x_{i-1}|^p}, &&1\le i\le m,\\
&\nu((z_i,z_{i+1/2})) = p_i -\frac{\epsilon^p}{m|x_i-x_{i-1}|^p}, &&1\le i\le m-1,\\
&\nu((z_m,\infty)) = p_m -\frac{\epsilon^p}{m|x_m-x_{m-1}|^p}.
\end{align*}
We define the continuous piecewise linear function $\phi :\bR \to \bR^d$ by
$$
\phi(z) := 
\begin{cases}
x_0  &z \in (-\infty, z_{1/2}), \\
\frac{z_i-z}{z_i-z_{i-1/2}}x_{i-1} + \frac{z-z_{i-1/2}}{z_i-z_{i-1/2}}x_{i} &z\in [z_{i-1/2},z_i), \\
x_i  & z\in [z_{i},z_{i+1/2}), \\
x_m  &z\in [z_m,\infty).
\end{cases}
$$
Since $\phi \in \cS^d(z_{1/2},\dots, z_m)$ has $2m= 2n-2\le (W-d-1) \lfloor \frac{W-d-1}{6d} \rfloor \lfloor \frac{L}{2} \rfloor+2$ breakpoints, by Lemma \ref{CPwL}, $\phi \in \cN\cN(W,L)$.

In order to estimate $\cW_p(\mu,\phi_\# \nu)$, let us denote the line segment joining $x_{i-1}$ and $x_i$ by $\cL_i :=\{(1-t)x_{i-1} + tx_i\in \bR^d: 0<t\le 1 \}$. Then $\phi_\#\nu$ is supported on $\cup_{i=1}^m \cL_i \cup \{x_0\}$ and $\phi_\#\nu(\{x_0\}) = p_0$, $\phi_\# \nu(\{x_i\}) =p_i-\frac{\epsilon^p}{m|x_i-x_{i-1}|^p}$, $\phi_\# \nu(\cL_i) =p_i$ for $i=1,\dots,m$. By considering the sum of product measures
$$
\gamma = \delta_{x_0} \times \phi_\#\nu|_{\{x_0\}} + \sum_{i=1}^{m} \delta_{x_i} \times \phi_\#\nu|_{L_i},
$$
which is a coupling of $\mu$ and $\phi_\#\nu$, we have
\begin{align*}
\cW_p^p(\mu,\phi_\#\nu) &\le \int_{\bR^d \times \bR^d} |x-y|^p d\gamma(x,y) \\
&= \sum_{i=1}^m \int_{\cL_i \setminus \{x_i\}} |x_i -y|^p d\phi_\#\nu(y) \\
&\le \sum_{i=1}^m |x_i-x_{i-1}|^p \phi_\#\nu(\cL_i \setminus \{x_i\}) \\
&= \epsilon^p.
\end{align*}
Let $\epsilon\to0$, we have $\cW_p^p(\mu,\cN\cN(W,L)_\#\nu)=0$, which completes the proof.
\end{proof}

\begin{lemma}\label{app discrete measure MMD}
Suppose $W\ge 7d+1$, $L\ge 2$ and the kernel satisfies Assumption \ref{kernel assumption 1} and \ref{kernel assumption 2}. Let $\nu$ be an absolutely continuous probability distribution on $\bR$. If $n\le \frac{W-d-1}{2} \lfloor \frac{W-d-1}{6d} \rfloor \lfloor \frac{L}{2} \rfloor +2$, then for any $\mu\in \cP(n)$,
$$
\MMD(\mu,\cN\cN(W,L)_\#\nu) =0.
$$
\end{lemma}
\begin{proof}
Similar to the proof of Lemma \ref{app discrete measure}, we assume that $m:=n-1\ge 1$ and $\mu=\sum_{i=0}^{m} p_i \delta_{x_i}$ with $p_i>0$ for all $0\le i\le m$. For any $\epsilon<m \min_{1\le i\le m} p_i$, we choose $2m$ points $z_{1/2} < z_1 < z_{3/2} < \dots < z_{m-1/2}< z_m$ such that 
\begin{align*}
&\nu((-\infty,z_{1/2})) = p_0, \\
&\nu((z_{i-1/2},z_{i})) = \frac{\epsilon}{m}, &&1\le i\le m,\\
&\nu((z_i,z_{i+1/2})) = p_i -\frac{\epsilon}{m}, &&1\le i\le m-1,\\
&\nu((z_m,\infty)) = p_m -\frac{\epsilon}{m}.
\end{align*}
Define $\phi :\bR \to \bR^d$ as in the proof of Lemma \ref{app discrete measure}, then $\phi \in \cN\cN(W,L)$.

It remains to estimate $\MMD(\mu, \phi_\#\nu)$. Recall that we denote the line segment $\cL_i :=\{(1-t)x_{i-1} + tx_i\in \bR^d: 0<t\le 1 \}$, then $\phi_\#\nu(\{x_0\}) = p_0$, $\phi_\# \nu(\{x_i\}) =p_i-\epsilon/m$ and $\phi_\# \nu(\cL_i) =p_i$ for $i=1,\dots,m$. Thanks to \citep[Theorem 1]{sriperumbudur2010hilbert}, one has
$$
\MMD(\mu, \phi_\#\nu) = \left\| \int_{\bR^d} K(\cdot,x)d\mu(x) - \int_{\bR^d} K(\cdot,x)d\phi_\# \nu(x) \right\|_{\cH_K},
$$
where the integrals are defined in Bochner sense. Hence,
\begin{align*}
\MMD(\mu, \phi_\#\nu) &= \left\| \frac{\epsilon}{m}\sum_{i=1}^m K(\cdot,x_i) - \sum_{i=1}^m \int_{\cL_i \setminus \{x_i\}} K(\cdot,x)d\phi_\# \nu(x) \right\|_{\cH_K} \\
&\le \sum_{i=1}^m \frac{\epsilon}{m} \|K(\cdot,x_i)\|_{\cH_K} + \int_{\cL_i \setminus \{x_i\}} \| K(\cdot,x) \|_{\cH_K} d\phi_\# \nu(x)  \\
&\le 2\sqrt{\kappa} \epsilon,
\end{align*}
where we use $\| K(\cdot,x) \|_{\cH_K} = \sqrt{k(x,x)}\le \sqrt{\kappa} $ by Assumption \ref{kernel assumption 2} in the last inequality. Letting $\epsilon\to 0$ finishes the proof.
\end{proof}

\subsection{Proof of Theorem \ref{app finite moment}}

We will use the following lemma to estimate the Wasserstein distances of two distributions.

\begin{lemma}\label{Wp estimate method}
If two probability measures $\mu$ and $\gamma$ on $X\subseteq \bR^d$ can be decomposed into non-negative measures as $\mu = \sum_{j\ge 1} \mu_j$ and $\gamma = \sum_{j\ge 1} \gamma_j$ such that $\mu_j(X) = \gamma_j(X)$ for all $j\ge 1$, then
$$
\cW_p^p(\mu,\gamma) \le \sum_{j\ge 1} \mu_j(X) \cW_p^p \left(\frac{\mu_j}{\mu_j(X)}, \frac{\gamma_j}{\gamma_j(X)} \right).
$$
In particular, if the support of $\mu$ can be covered by $n$ balls $\{B(x_j,r)$: $j=1,\dots,n\}$, then there exists $c_j\ge 0$ such that $\sum_{j=1}^n c_j =1$ and 
$$
\cW_p \left(\mu, \sum_{j=1}^{n} c_j \delta_{x_j} \right) \le r.
$$
\end{lemma}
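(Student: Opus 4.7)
The plan is to establish the general inequality by explicitly constructing a coupling from pieces, then derive the covering-number corollary by choosing a disjointification of the cover.

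For the first part, I would proceed as follows. First I would dispose of degenerate indices: whenever $\mu_j(X) = \nu_j(X) = 0$ the corresponding term is zero on both sides and can be discarded, so I may assume $\mu_j(X)>0$ for the indices that remain. For each such $j$, the set $\Pi(\mu_j/\mu_j(X), \nu_j/\nu_j(X))$ is non-empty and the infimum in the Wasserstein definition is attained, say by a coupling $\gamma_j$. I would then assemble
\[
\gamma := \sum_{j \ge 1} \mu_j(X)\,\gamma_j,
\]
and check that $\gamma$ is a coupling of $\mu$ and $\nu$: the first marginal of $\mu_j(X)\gamma_j$ is $\mu_j$, so summing gives $\mu$, and similarly for $\nu$. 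Substituting into the Wasserstein definition,
\[
W_p^p(\mu,\nu) \le \int_{X\times X} |x-y|^p\,d\gamma(x,y) = \sum_{j\ge 1} \mu_j(X) \int_{X\times X} |x-y|^p\, d\gamma_j(x,y),
\]
and by optimality of $\gamma_j$ the inner integral equals $W_p^p(\mu_j/\mu_j(X), \nu_j/\nu_j(X))$, which is exactly the claimed bound.

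For the ``in particular'' assertion, I would turn the cover $\{B(x_j,r)\}_{j=1}^n$ into a disjoint partition by setting
\[
A_1 := B(x_1,r),\qquad A_j := B(x_j,r)\setminus\bigcup_{i<j} B(x_i,r)\quad (2\le j\le n),
\]
and defining $\mu_j := \mu|_{A_j}$, $c_j := \mu_j(X) = \mu(A_j)$. Since the balls cover $\mathrm{supp}(\mu)$, the $\mu_j$ sum to $\mu$ and $\sum c_j = 1$. Choose the discrete measure $\nu := \sum_{j=1}^n c_j \delta_{x_j}$ with the matching decomposition $\nu_j := c_j \delta_{x_j}$, so $\nu_j(X)=c_j = \mu_j(X)$. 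Applying the first part,
\[
W_p^p(\mu,\nu) \le \sum_{j:\,c_j>0} c_j\, W_p^p\bigl(\mu_j/c_j,\, \delta_{x_j}\bigr) = \sum_{j:\,c_j>0} \int_{A_j} |y-x_j|^p\, d\mu(y) \le \sum_j c_j\, r^p = r^p,
\]
where I used that $A_j \subseteq B(x_j,r)$ so the integrand is bounded by $r^p$. Taking $p$-th roots yields $W_p(\mu,\nu)\le r$.

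There is no real obstacle here; the only point requiring mild care is verifying that $\gamma = \sum_j \mu_j(X)\gamma_j$ is a well-defined (probability) coupling, which follows from $\sum_j \mu_j(X) = \mu(X) = 1$ and the marginal computation above, together with handling the zero-mass indices by convention. The construction is the standard ``glueing'' argument for Wasserstein transport, adapted to arbitrary (not necessarily finite) decompositions.
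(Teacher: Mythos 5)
Your proposal is correct and follows essentially the same route as the paper: glue optimal couplings of the normalized pieces into a coupling of $\mu$ and $\nu$ for the general inequality, then disjointify the ball cover, restrict $\mu$ to the resulting partition, and apply the first part with the atomic measure $\sum_j c_j\delta_{x_j}$. The only cosmetic difference is that the paper intersects the partition sets with the support of $\mu$ explicitly, which changes nothing in the argument.
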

\begin{proof}
Let $\xi_j$ be the optimal coupling of $\frac{\mu_j}{\mu_j(X)}$ and $\frac{\gamma_j}{\gamma_j(X)}$, then it is easy to check that
$$
\xi = \sum_{j\ge 1} \mu_j(X) \xi_j
$$
is a coupling of $\mu$ and $\gamma$. Hence,
\begin{align*}
\cW_p^p(\mu,\gamma) &\le \int_X\int_X |x-y|^p d\xi(x,y) \\
&= \sum_{j\ge 1} \mu_j(X) \int_X\int_X |x-y|^p d\xi_j(x,y) \\
&= \sum_{j\ge 1} \mu_j(X) \cW_p^p \left(\frac{\mu_j}{\mu_j(X)}, \frac{\gamma_j}{\gamma_j(X)} \right).
\end{align*}

For the second part of the lemma, Let $A$ be the support of $\mu$, denote $A_1 := A \cap B(x_1,r)$ and $A_{j+1} := A\cap B(x_{j+1},r) \setminus (\cup_{i=1}^{j} A_i)$, then $\{A_j:j=1,\dots,n \}$ is a partition of $A$. This partition induces a decomposition of $\mu = \sum_{j=1}^n \mu|_{A_j}$.

Let $c_j = \mu(A_j)$, then $\sum_{j=1}^n c_j =1$ and if $c_j\neq 0$,
\begin{align*}
\cW_p^p (c_j^{-1} \mu|_{A_j}, \delta_{x_j}) \le c_j^{-1} \int\int |x-y|^p d\delta_{x_j}(x) d\mu|_{A_j}(y) 
\le r^p.
\end{align*}
By the first part of the lemma, we have
\begin{align*}
\cW_p^p \left(\mu, \sum_{j=1}^{n} c_j \delta_{x_j} \right) \le \sum_{j=1}^n c_j \cW_p^p (c_j^{-1} \mu|_{A_j}, \delta_{x_j}) 
\le \sum_{j=1}^n c_j r^p = r^p,
\end{align*}
which completes the proof.
\end{proof}

Using Lemma \ref{Wp estimate method}, we can give upper bounds of the approximation error $\cW_p(\mu,\cP(n))$ for distribution $\mu$ with finite moment.

\begin{theorem}\label{app finite moment by discrete}
Let $\mu$ be a probability distribution on $\bR^d$ with finite absolute $q$-moment $M_q(\mu)<\infty$ for some $q>p\ge 1$. Then for any $n\in \bN$, 
$$
\cW_p(\mu,\cP(n)) \le C_{p,q,d}(M_q^q(\mu)+1)^{1/p}
\begin{cases}
n^{-1/d}, &q>p+p/d  \\
(n/\log_2 n)^{-1/d}, &p<q\le p+p/d
\end{cases}
$$
where $C_{p,q,d}$ is a constant depending only on $p$, $q$ and $d$.
\end{theorem}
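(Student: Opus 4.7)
The plan is to combine a dyadic shell decomposition of $\bR^d$ with Lemma \ref{Wp estimate method}. Define $A_0 := B(0,1)$ and $A_k := B(0,2^k)\setminus B(0,2^{k-1})$ for $k\ge 1$; Markov's inequality gives $\mu(A_k)\le \min(1,\,2^q M_q^q(\mu)\,2^{-kq})$. For any positive integers $(N_k)$ with $\sum_k N_k\le n$, each shell $A_k$ can be covered by $N_k$ Euclidean balls of radius $r_k\le c_d\,2^k N_k^{-1/d}$ using the standard covering-number bound for a ball of radius $2^k$. Applying the second part of Lemma \ref{Wp estimate method} inside each shell and the first part to the partition $\mu=\sum_k \mu|_{A_k}$ produces an element of $\cP(n)$ with
\[
  W_p^p(\mu,\cP(n)) \le c_d^p \sum_k \mu(A_k)\,2^{kp}\,N_k^{-p/d}.
\]
Everything then reduces to choosing a good allocation $(N_k)$.

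For the generic regime $q>p+p/d$ I plan to use the Lagrangian allocation $N_k\propto (\mu(A_k)\,2^{kp})^{d/(d+p)}$, which turns the bound into $c_d^p T^{(d+p)/d}\,n^{-p/d}$ with $T:=\sum_k(\mu(A_k)\,2^{kp})^{d/(d+p)}$. Rewriting each term as $(\mu(A_k)\,2^{kq})^{d/(d+p)}\cdot 2^{-kd(q-p)/(d+p)}$ and applying Hölder's inequality with conjugate exponents $(d+p)/d$ and $(d+p)/p$, the first factor sums to a constant multiple of $M_q^{qd/(d+p)}(\mu)$ (since $\sum_k \mu(A_k)\,2^{kq}\le 2^q(M_q^q(\mu)+1)$), while the second factor is a geometric series with ratio $2^{-d(q-p)/p}<1/2$ precisely when $q>p+p/d$ and is therefore absorbed into a $(d,p,q)$-dependent constant. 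Raising to the $1/p$ power yields the first line of the claimed bound.

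In the critical range $p<q\le p+p/d$ the same geometric series has ratio $\ge 1/2$ and its constant degenerates as $q\downarrow p$, so I plan to switch to a uniform allocation $N_k=\lfloor n/K\rfloor$ over the first $K$ shells and collapse the tail beyond $A_{K-1}$ to a single Dirac at the origin. The inside contribution becomes $c_d^p(K/n)^{p/d}\sum_{k<K}\mu(A_k)\,2^{kp}$, which is at most $C(M_q^p(\mu)+1)(K/n)^{p/d}$, and the tail is at most $C_{p,q} M_q^q(\mu)\,2^{K(p-q)}$; choosing $K$ of order $\log_2 n$ balances the two contributions and produces the $(n/\log_2 n)^{-1/d}$ rate.

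The main obstacle is this transition between regimes: the Hölder-based optimisation is sharp whenever the geometric series has small ratio, but its $(d,p,q)$-dependent constant blows up as $q$ approaches the critical threshold from above, while the uniform-allocation fallback restores a clean constant at the price of the $(\log_2 n)^{1/d}$ factor in the statement. Keeping track of the various Jensen/Hölder constants and absorbing the various powers of $M_q(\mu)$ (which equal $M_q^{q/p}$ in the first regime and $M_q$ in the critical regime) into the uniform prefactor $(M_q^q(\mu)+1)^{1/p}$ then delivers the stated inequality, with the additive $1$ covering the case where $M_q(\mu)$ is small.
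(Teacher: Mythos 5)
Your overall strategy is the same as the paper's: dyadic shells, Markov's inequality for the shell masses, a covering of each shell combined with Lemma \ref{Wp estimate method} to produce a per-shell discrete approximant with error of order $2^kN_k^{-1/d}$, a Dirac at the origin for the far tail, and an allocation of the $n$ atoms across the shells (with roughly $\log_2 n$ equally weighted shells in the critical range). Your treatment of the range $p<q\le p+p/d$ is essentially identical to the paper's. In the range $q>p+p/d$ you replace the paper's explicit geometric allocation ($N_k\propto 2^{-k}$ over $\lfloor\log_2 n\rfloor$ shells) by a Lagrangian allocation plus H\"older; this is a legitimate variant, and in fact your own computation shows the relevant geometric series converges for every $q>p$ (only its constant degenerates as $q\downarrow p$), so, once made rigorous, this allocation is if anything slightly stronger than the paper's.

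There is, however, a concrete gap in the generic regime as written: you never truncate to finitely many shells there, and the Lagrangian allocation $N_k\propto(\mu(A_k)2^{kp})^{d/(d+p)}$ is neither an integer nor $\ge 1$ for large $k$. If $\mu$ has unbounded support, infinitely many shells carry mass, and every shell you intend to cover needs at least one ball, so the displayed inequality $W_p^p(\mu,\cP(n))\le c_d^p\sum_k\mu(A_k)2^{kp}N_k^{-p/d}$ with that choice of $N_k$ does not correspond to any measure in $\cP(n)$; the tail mass is simply unaccounted for. The repair is exactly the device you already use in the critical range: keep only the shells $k\le K$ with $K=\lfloor\log_2 n\rfloor$, send $\mu|_{\{|x|>2^{K-1}\}}$ to $\delta_0$, whose weighted cost is $\int_{|x|>2^{K-1}}|x|^p\,d\mu\le 2^{(1-K)(q-p)}M_q^q(\mu)\lesssim M_q^q(\mu)\,n^{-p/d}$ precisely because $q-p>p/d$, and round the allocation, e.g.\ $N_k=\max\{1,\lfloor \tilde N_k\rfloor\}$ with the continuous optimum $\tilde N_k$ computed for budget $n/2$; the rounding costs at most $K+1$ extra atoms and a factor $2^{p/d}$ in the bound (shells where $\tilde N_k<1$ only improve when given one ball). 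With this patch your H\"older computation goes through, the various powers of $M_q(\mu)$ are absorbed into $(M_q^q(\mu)+1)^{1/p}$ as you indicate, and the stated bound follows; small $n$ is handled by comparing $\mu$ with $\delta_0$ and enlarging the constant.
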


\begin{proof}
Let $B_0 = \{x\in \bR^d: |x|\le 1 \}$ and $B_j = (2^jB_0)\setminus (2^{j-1}B_{0})$ for $j\in \bN$, then $\{B_j\}_{j\ge 0}$ is a partition of $\bR^d$. For any $k\ge 0$, we denote $E_k:=\bR^d \setminus \cup_{j=0}^k B_j$. Let $\mu_j := \mu(B_j)^{-1} \mu|_{B_j}$ and $\tilde{\mu}_k:= \mu(E_k)^{-1} \mu|_{E_k}$, then for each $k\ge 0$, we can decompose $\mu$ as 
$$
\mu = \sum_{j=0}^k \mu(B_j) \mu_j + \mu(E_k) \tilde{\mu}_k.
$$
By Markov's inequality, we have
$$
\mu(B_j) \le \mu(|x|> 2^{j-1}) \le M_q^q(\mu)2^{-q(j-1)}, \quad j\ge 1.
$$
Furthermore, if $\mu(E_k)\neq 0$, 
\begin{align*}
\cW_p^p(\tilde{\mu}_k, \delta_0) &\le \mu(E_k)^{-1} \int_{|x|>2^k} |x|^p d\mu(x) \\
&\le \mu(E_k)^{-1} \int_{|x|>2^k} |x|^p \frac{|x|^{q-p}}{2^{k(q-p)}} d\mu(x)\\
&\le \mu(E_k)^{-1} M_q^q(\mu) 2^{-k(q-p)}.
\end{align*}

Thanks to \citep[Theorem 3.1]{verger2005covering}, the ball $\{x:|x|\le 2^j\}$ can be covered by at most $C_d r^{-d}$ balls with radius $2^jr$, where $C_d \le Cd^2$ for some constant $C>0$. Let $r_0,\dots,r_k$ be $k+1$ positive numbers, then each $B_j \subseteq \{x:|x|\le 2^j\}$ can be covered by at most $n_j := \lceil C_d r_j^{-d} \rceil$ balls with radius $2^jr_j$. We denote the collection of the centers of these balls by $\Lambda_j$. By Lemma \ref{Wp estimate method}, there exists a probability measure $\gamma_j$ of the form
$$
\gamma_j = \sum_{x\in \Lambda_j} c_j(x) \delta_x
$$
such that $\cW_p(\mu_j,\gamma_j) \le 2^j r_j$.

We consider the probability distribution
$$
\gamma = \sum_{j=0}^k \mu(B_j) \gamma_j + \mu(E_k) \delta_0 \in \cP\left(1+\sum_{j=0}^k n_j \right).
$$
By Lemma \ref{Wp estimate method}, we have
\begin{align*}
\cW_p^p(\mu,\gamma) \le&\sum_{j=0}^k \mu(B_j) \cW_p^p(\mu_j,\gamma_j) + \mu(E_k) \cW_p^p(\tilde{\mu}_k, \delta_0) \\
\le& r_0^p + M_q^q(\mu) \sum_{j=1}^k 2^{q-j(q-p)} r_j^p + M_q^q(\mu)2^{-k(q-p)}.
\end{align*}

Finally, if $q> p+p/d$, we choose $k = \lfloor \log_2 n \rfloor -1$ and $r_j = C_d^{1/d} 2^{(j-k)/d}$ for $0\le j\le k$. Then, $1+ \sum_{j=0}^k n_j = 1+ \sum_{j=0}^{k} 2^{k-j} =2^{k+1} \le n$, which implies $\gamma \in \cP(n)$, and
\begin{align*}
\cW_p^p(\mu,\gamma) &\le M_q^q(\mu) 2^{-k(q-p)} + C_d^{p/d} 2^{-pk/d} + 2^q C_d^{p/d} M_q^q(\mu) 2^{-pk/d} \sum_{j=1}^k 2^{-j(q-p-p/d)} \\
&\le \left(M_q^q(\mu) + C_d^{p/d} + C_d^{p/d} M_q^q(\mu) \frac{2^q}{2^{q-p-p/d}-1} \right) 2^{-pk/d} \\
&\le C_{p,q,d}^p (M_q^q(\mu)+1) n^{-p/d}.
\end{align*}
If $p<q\le p+p/d$, we choose $k=\lceil \frac{p}{d(q-p)} \log_2 n \rceil$ and $r_j = C_d^{1/d} m^{-1/d}$ for $0\le j\le k$, where $m:= \lfloor \frac{n-1}{k+1} \rfloor$ . Then we have $1+ \sum_{j=0}^k n_j = 1+ (k+1)\lfloor \frac{n-1}{k+1} \rfloor \le n$, which implies $\gamma \in \cP(n)$, and
\begin{align*}
\cW_p^p(\mu,\gamma) &\le M_q^q(\mu) 2^{-k(q-p)} + C_d^{p/d} m^{-p/d} + 2^q C_d^{p/d} M_q^q(\mu) m^{-p/d} \sum_{j=1}^k 2^{-j(q-p)} \\
&\le M_q^q(\mu) n^{-p/d} + C_d^{p/d}\left(1+ M_q^q(\mu) \frac{2^q}{2^{q-p}-1} \right) m^{-p/d} \\
&\le C_{p,q,d}^p (M_q^q(\mu)+1) (n/\log_2 n)^{-p/d}. \qedhere
\end{align*}
\end{proof}

The expected Wasserstein distance $\bE W_p(\mu,\hat{\mu}_n)$ between a probability distribution $\mu$ and its empirical distribution $\hat{\mu}_n$ has been studied extensively in statistics literature \citep{fournier2015rate,bobkov2019one,weed2019sharp,lei2020convergence}. It was shown in \citep{lei2020convergence} that, if $M_q(\mu)<\infty$, the convergence rate of $\bE W_p(\mu,\hat{\mu}_n)$ is $n^{-s(p,q,d)}$ with $s(p,q,d) = \min\{ \frac{1}{d}, \frac{1}{2p}, \frac{1}{p}-\frac{1}{q}\}$, ignoring the logarithm factors. Since $\hat{\mu}_n\in \cP(n)$, it is easy to see that $\cW_p(\mu,\cP(n))\le \bE \cW_p(\mu, \hat{\mu}_n)$. In Theorem \ref{app finite moment by discrete}, we construct a discrete measure that achieves the order $1/d\ge s(p,q,d)$, which is slightly better than the empirical measure in some situations.

By the triangle inequality (\ref{upper bound}), we can use Theorem \ref{app finite moment by discrete} and Lemma \ref{app discrete measure} to prove our main Theorem \ref{app finite moment}.

\begin{proof}[Proof of Theorem \ref{app finite moment}]
Inequality (\ref{upper bound}) says that, for any $n$,
\begin{align*}
\cW_p(\mu,\cN\cN(W,L)_\#\nu)
\le \cW_p(\mu,\cP(n)) + \sup_{\xi\in \cP(n)} \cW_p(\xi,\cN\cN(W,L)_\#\nu).
\end{align*}
If we choose $n= \frac{W-d-1}{2} \lfloor \frac{W-d-1}{6d} \rfloor \lfloor \frac{L}{2} \rfloor +2$, Lemma \ref{app discrete measure} implies that $\cW_p(\mu,\cN\cN(W,L)_\#\nu) \le \cW_p(\mu,\cP(n))$. By Theorem \ref{app finite moment by discrete}, it can be bounded by
$$
C (M_q^q(\mu)+1)^{1/p}
\begin{cases}
n^{-1/d}, &q>p+p/d  \\
(n/\log_2 n)^{-1/d}, &p<q\le p+p/d
\end{cases}
$$
for some constant $C>0$ depending only on $p$, $q$ and $d$.

Since $W\ge 7d+1$ and $L\ge 2$, a simple calculation shows $cW^2L/d \le n \le CW^2L/d$ with $c=1/384$ and $C=1/12$. Hence, $n^{-1/d}\le c^{-1}d^{1/d}(W^2L)^{-1/d} \le 2c^{-1}(W^2L)^{-1/d}$ and $(\log_2 n)^{1/d} \le (\log_2 W^2L)^{1/d}$, which give us the desired bounds.
\end{proof}

\subsection{Proof of Theorem \ref{app low dim}}

By the triangle inequality (\ref{upper bound}), Theorem \ref{app low dim} is a direct consequence of Lemma \ref{app discrete measure} and the upper bound of $\cW_p(\mu,\cP(n))$ in the following Theorem \ref{app low dim by discrete}. The proof follows the same argument as the proof of Theorem \ref{app finite moment}. We also give a lower bound in the following theorem, which indicates the tightness of the upper bound.

\begin{theorem}\label{app low dim by discrete}
Suppose that $1\le p<q< \infty$. Let $\mu$ be a probability measure on $\bR^d$ with finite absolute $q$-moment $M_q(\mu)<\infty$. If $s>s^*_{pq/(q-p)}(\mu)$, then for sufficiently large $n$,
$$
\cW_p(\mu,\cP(n)) \le (M_q^p(\mu)+1)^{1/p} n^{-1/s}.
$$
If $t<s_*(\mu)$, then there exists a constant $C_\mu$ depending on $\mu$ such that
$$
\cW_p(\mu,\cP(n)) \ge C_\mu n^{-1/t}.
$$
\end{theorem}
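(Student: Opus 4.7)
The upper and lower bounds decouple, so I would handle them separately. For the upper bound, I follow the strategy of theorem \ref{app finite moment by discrete} but exploit the covering-number characterization of $d^*_{pq/(q-p)}(\mu)$ in place of the dyadic shell decomposition. Given $s > d^*_{pq/(q-p)}(\mu)$, the limsup definition yields, for all sufficiently small $\epsilon>0$, a set $S_\epsilon \subseteq \bR^d$ with $\mu(S_\epsilon) \ge 1 - \epsilon^{pq/(q-p)}$ that is coverable by at most $N_\epsilon := \lceil \epsilon^{-s} \rceil$ balls of radius $\epsilon$. The second part of lemma \ref{Wp estimate method}, applied to the conditional measure on $S_\epsilon$, produces a discrete measure $\nu_\epsilon$ supported on those $N_\epsilon$ centers with $W_p(\mu|_{S_\epsilon}/\mu(S_\epsilon),\nu_\epsilon) \le \epsilon$. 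The tail $\mu|_{S_\epsilon^c}$ is controlled by Hölder's inequality with exponents $q/p$ and $q/(q-p)$:
\[
\int_{S_\epsilon^c} |x|^p \, d\mu \le M_q^p(\mu) \, \mu(S_\epsilon^c)^{(q-p)/q} \le M_q^p(\mu) \, \epsilon^p,
\]
where the exponent $pq/(q-p)$ appearing in the definition of $d^*$ is chosen precisely so that $(pq/(q-p))\cdot((q-p)/q)=p$. Combining via the first part of lemma \ref{Wp estimate method}, the measure $\nu := \mu(S_\epsilon)\nu_\epsilon + \mu(S_\epsilon^c)\delta_0 \in \cP(N_\epsilon+1)$ satisfies $W_p^p(\mu,\nu) \le (1+M_q^p(\mu))\epsilon^p$. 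Choosing $\epsilon \asymp n^{-1/s}$ then yields the claimed $W_p(\mu,\cP(n)) \le (M_q^p(\mu)+1)^{1/p} n^{-1/s}$ for large $n$.

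For the lower bound, the idea is that $n$ atoms cannot capture enough of $\mu$'s mass at scale $n^{-1/t}$. Since $t < d_*(\mu)$ and $\cN_\epsilon(\mu,\delta)$ is non-increasing in $\delta$, I fix $\delta_0>0$ small enough that $\liminf_{\epsilon \to 0}\frac{\log_2 \cN_\epsilon(\mu,\delta_0)}{-\log_2 \epsilon} > t$, so that $\cN_\epsilon(\mu,\delta_0) > \epsilon^{-t}$ for all sufficiently small $\epsilon$. For any $\nu \in \cP(n)$ with atoms $x_1,\dots,x_n$ and any such $\epsilon$ with $n < \cN_\epsilon(\mu,\delta_0)$, the set $U_\epsilon := \bigcup_{i=1}^n B(x_i,\epsilon)$ must satisfy $\mu(U_\epsilon) < 1-\delta_0$ directly from the definition of the $(\epsilon,\delta)$-covering number (otherwise $U_\epsilon$ would witness $\cN_\epsilon(\mu,\delta_0)\le n$). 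Because any coupling $\gamma$ of $\mu$ and $\nu$ has its $y$-marginal supported on $\{x_i\}_{i=1}^n$, every $(x,y)$ in the support of $\gamma$ with $x \notin U_\epsilon$ satisfies $|x-y|\ge \epsilon$, hence
\[
W_p^p(\mu,\nu) \ge \int |x-y|^p\, d\gamma \ge \epsilon^p\,\mu(U_\epsilon^c) > \delta_0\, \epsilon^p.
\]
Setting $\epsilon = n^{-1/t}$, which satisfies $n < \cN_\epsilon(\mu,\delta_0)$ for all large $n$, and taking the infimum over $\nu \in \cP(n)$ gives $W_p(\mu,\cP(n)) \ge \delta_0^{1/p} n^{-1/t}$ with $C_\mu = \delta_0^{1/p}$.

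The main obstacle is calibrating the Hölder exponent in the upper bound: the tail estimate must produce exactly $\epsilon^p$ to match the Wasserstein cost on the bulk, and this matching is precisely what forces the threshold $d^*_{pq/(q-p)}$ in the hypothesis; any less-decaying tail mass would either blow up the discretization budget or destroy the $W_p$ error. Once this balance is identified, the rest of the upper bound is a routine combination of lemma \ref{Wp estimate method} with the covering bound from the definition of $d^*_{pq/(q-p)}$. The lower bound is essentially a direct translation of the definition of $d_*(\mu)$ together with the simple observation that any $n$-point discrete measure induces at most an $n$-ball cover of its support.
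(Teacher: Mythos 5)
Your proposal is correct and follows essentially the same route as the paper's proof: cover a set of $\mu$-mass at least $1-\epsilon^{pq/(q-p)}$ by roughly $\epsilon^{-s}$ balls of radius $\epsilon$, quantize onto the centers plus an atom at the origin, control the tail by H\"older with exponents $q/p$ and $q/(q-p)$, and for the lower bound translate $t<d_*(\mu)$ into $\mu$-mass at least $\delta$ lying at distance $\ge\epsilon$ from the $n$ atoms. The only bookkeeping point is that to get the stated bound with no extra constant you should take $\epsilon=n^{-1/s}$ exactly and use the strict inequality $\cN_\epsilon(\mu,\epsilon^{pq/(q-p)})<\epsilon^{-s}=n$, so that $n-1$ balls plus the atom at $0$ keep the discrete measure inside $\cP(n)$, rather than $\epsilon\asymp n^{-1/s}$ with $\lceil\epsilon^{-s}\rceil+1$ atoms.
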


\begin{proof}
If $s>s^*_{pq/(q-p)}(\mu)$, then there exists $\epsilon_0>0$ such that, $\log_2 \cN_\epsilon(\mu,\epsilon^{pq/(q-p)}) < -s \log_2 \epsilon$ for all $\epsilon\in(0,\epsilon_0)$. For any $n>\epsilon_0^{-s}$, we set $\epsilon= n^{-1/s}<\epsilon_0$, then $\cN_\epsilon(\mu,\epsilon^{pq/(q-p)})<n$. 

By the definition of $(\epsilon,\delta)$-covering number, there exists $S$ with $\mu(S)\ge 1-\epsilon^{pq/(q-p)}$ such that $S$ is covered by at most $n-1\ge \cN_\epsilon(S)$ balls $B(x_j,\epsilon)$, $j=1,\dots,n-1$. Let $F_1= S\cap B(x_1,\epsilon)$ and $F_j = (S\cap B(x_j,\epsilon)) \setminus (\cup_{1\le i<j} F_i) $ for $2\le j\le n-1$, then $F_j\subseteq B(x_j,\epsilon)$ and $\{ F_j:1\le j\le n-1 \}$ is a partition of $S$.
	
We consider the probability distribution $\gamma = \mu(\bR^d \setminus S) \delta_0 + \sum_{j=1}^{n-1} \mu(F_j) \delta_{x_j} \in \cP(n)$. Let 
$$
\xi = \delta_0 \times \mu|_{\bR^d \setminus S} + \sum_{j=1}^{n-1} \delta_{x_j} \times \mu|_{F_j},
$$
then $\xi$ is a coupling of $\gamma$ and $\mu$, and we have
\begin{align*}
\cW_p^p(\mu,\gamma) &\le \int_{\bR^d \times \bR^d} |x-y|^p d\xi(x,y) \\
&= \int_{\bR^d\setminus S} |y|^pd\mu(y) + \sum_{j=1}^{n-1} \int_{F_j} |x_j -y|^p d\mu(y) \\
&\le \mu(\bR^d\setminus S)^{1-p/q} M_q^p(\mu) + \mu(S)\epsilon^p,
\end{align*}
where we use H\"older's inequality in the last step. Since $\mu(\bR^d\setminus S)\le \epsilon^{pq/(q-p)}$, we have
$$
\cW_p^p(\mu,\gamma) \le (M_q^p(\mu)+1) \epsilon^p = (M_q^p(\mu)+1) n^{-p/s}.
$$

The second part of the theorem was also proved in \citep{weed2019sharp}. If $t<s_*(\mu)$, there exists $\delta>0$ and $\epsilon_0>0$ such that $\cN_{\epsilon}(\mu,\delta) >\epsilon^{-t}$ for all $\epsilon\in(0,\epsilon_0)$.  For any $n>\epsilon_0^{-t}$, we set $\epsilon= n^{-1/t}<\epsilon_0$, then $\cN_\epsilon(\mu,\delta)>n$.  For any $\gamma = \sum_{i=1}^{n}p_i\delta_{x_i} \in \cP(n)$, let $S=\cup_{i=1}^n B(x_i,\epsilon)$, then $\mu(S)<1-\delta$ due to $\cN_\epsilon(\mu,\delta)>n$. This implies
$$
\mu\left( \left\{ y: \min_{1\le i\le n}|x_i-y|\ge \epsilon \right\} \right) \ge \delta.
$$
Hence, for any coupling $\xi$ of $\gamma$ and $\mu$,
\begin{align*}
\int_{\bR^d \times \bR^d} |x-y|^p d\xi(x,y) 
=& \int_{ \cup_{j=1}^n \{x_j\} \times\bR^d} |x-y|^p d\xi(x,y) \\
\ge& \int_{\bR^d} \min_{1\le i\le n}|x_i-y|^p d\mu(y) \\
\ge& \delta \epsilon^p = \delta n^{-p/t}.
\end{align*}
Take infimum in $\xi$ over all the couplings of $\mu$ and $\gamma$, we have $\cW_p(\mu,\gamma)\ge \delta^{1/p}n^{-1/t}$.
\end{proof}

We remark that \citep{weed2019sharp} gave similar upper bound on the expected error $\bE \cW_p(\mu,\hat{\mu}_n)$ of the empirical distribution $\hat{\mu}_n$. But the dimension they introduced is slightly large then $s^*_{pq/(q-p)}(\mu)$, hence our approximation order is better in some cases.

\subsection{Proof of Theorem \ref{app in MMD}}

The proof is similar to the proof of Theorem \ref{app finite moment}. By the ``triangle inequalit'', for any $n\in \bN$, 
$$
\begin{aligned}
\MMD(\mu,\cN\cN(W,L)_\#\nu)
\le \MMD(\mu,\cP(n)) + \sup_{\xi\in \cP(n)} \MMD(\xi,\cN\cN(W,L)_\#\nu).
\end{aligned}
$$
We choose $n= \frac{W-d-1}{2} \lfloor \frac{W-d-1}{6d} \rfloor \lfloor \frac{L}{2} \rfloor +2$, then Lemma \ref{app discrete measure MMD} and Proposition \ref{app by emp MMD} imply
$$
\MMD(\mu,\cN\cN(W,L)_\#\nu) \le \MMD(\mu,\cP(n)) \le \frac{8\kappa^{1/4}}{\sqrt{n}},
$$
where we set $\tau=2$ in Proposition \ref{app by emp MMD} to guarantee the existence of $\widehat{\mu}_n \in \cP(n)$ that satisfies the upper bound. Since $W\ge 7d+1$ and $L\ge 2$, it is easy to check that $n\ge cW^2L/d$ with $c=1/384$. Hence,
$$
\MMD(\mu,\cN\cN(W,L)_\#\nu) \le 160 \sqrt{d} \kappa^{1/4}(W^2L)^{-1/2}.
$$

\section{Conclusion}

In this paper, we study the approximation capacity of generative networks in three metrics: Wasserstein distances, MMD and $f$-divergences. For Wasserstein distances and MMD, we show that generative networks are universal approximators in the sense that, under mild conditions, they can approximately transform low-dimensional distributions to any high-dimensional distributions. The approximation bounds are obtained in terms of the width and depth of neural networks. We also show that the approximation orders in Wasserstein distances only depend on the intrinsic dimension of the target distribution. On the contrary, for $f$-divergences, it is impossibles to approximate the target distribution using neural networks, if the dimension of the source distribution is smaller than the intrinsic dimension of the target. 

One shortcoming of our analysis is that the weights in our network construction is unbounded. For example, in Lemma \ref{app discrete measure}, the weights of neural network diverge to infinity when the approximation error approaches zero. Using the space-filling
approach discovered in \citep{bailey2018size}, a recent paper \citep{perekrestenko2021high} estimated approximation bounds for neural networks with bounded weights, under the assumption that the source distribution is uniform and the target distribution has bounded support. It will be interesting to see whether their proof techniques can be applied to the general setting of this paper and combined with our analysis.

\section*{Acknowledgment}

The research of Y. Wang is supported by the HK RGC grant 16308518, the HK Innovation Technology Fund Grant  ITS/044/18FX and the Guangdong-Hong Kong-Macao Joint Laboratory for Data Driven Fluid Dynamics and Engineering Applications (Project 2020B1212030001).

\appendix

\renewcommand{\thesection}{\Alph{section}}

\section{Remaining proofs}
\subsection{Proof of Proposition \ref{dimension relation}}

We first recall the definition of Hausdorff and Minkowski dimensions.

\begin{definition}[Hausdorff and Minkowski dimensions]\label{Hausdorff Minkowski dimensions}
The $\alpha$-Hausdorff measure of a set $S$ is defined as
$$
\cH^\alpha(S):= \lim_{\epsilon\to 0} \inf \left\{ \sum_{j=1}^{\infty} r_j^\alpha: S\subseteq \cup_{j=1}^\infty B(x_j,r_j), r_j\le \epsilon \right\},
$$
where $B(x,r)$ is the open ball with center $x$ and radius $r$, and the Hausdorff dimension of $S$ is
$$
\dim_H(S) := \inf \{ \alpha: \cH^\alpha(S) =0 \}.
$$
The Minkowski dimension of $S$ is
$$
\dim_M(S) := \limsup_{\epsilon\to 0} \frac{\log_2 \cN_\epsilon(S)}{-\log_2 \epsilon},
$$
where $\cN_\epsilon(S)$ is the $\epsilon$-covering number of $S$.
The Hausdorff and Minkowski dimensions of a measure $\mu$ are defined respectively by
\begin{align*}
\dim_H(\mu) &:= \inf \{ \dim_H(S): \mu(S)=1 \}, \\
\dim_M(\mu) &:= \inf \{ \dim_M(S): \mu(S)=1 \}.
\end{align*}
\end{definition}

We first prove that $s_p^*(\mu) \le \dim_M(\mu)$. Observe that for any $p$ and any $S$ with $\mu(S)=1$, 
$$
\cN_\epsilon(\mu,\epsilon^p) \le \cN_\epsilon(\mu,0) \le \cN_\epsilon(S).
$$
A straightforward application of the definitions implies that $s_p^*(\mu) \le \dim_M(\mu)$.

For the inequality $\dim_H(\mu) \le s_*(\mu)$, we follow the idea in \citep{weed2019sharp}. By \citep[Proposition 10.3]{falconer1997techniques}, the Hausdorff dimension of $\mu$ can be expressed as
$$
\dim_H(\mu) = \inf \left\{ s\in \bR : \liminf_{r\to 0} \frac{\log_2 \mu(B(x,r))}{\log_2 r} \le s \mbox{ for } \mu \mbox{-a.e. } x \right\}.
$$
This implies for any $s< \dim_H(\mu)$ that
$$
\mu\left( \left\{ x: \exists r_x>0, \forall r\le r_x, \mu(B(x,r))\le r^s \right\} \right) \ge \mu\left( \left\{ x: \liminf_{r\to 0} \frac{\log_2 \mu(B(x,r))}{\log_2 r}> s \right\} \right) >0.
$$
Consequently (see the proof of \citep[Corollary 12.16]{graf2007foundations}), there exists $r_0>0$ and a compact set $K\subseteq \bR^d$ with $\mu(K)>0$ such that $\mu(B(x,r)) \le r^s$ for all $x\in K$ and all $r\le r_0$. 

For any $\delta<\mu(K)/2$ and any $S$ with $\mu(S) \ge 1-\delta$, we have $\mu(S\cap K) \ge \mu(K) - \mu(\bR^d \setminus S) \ge \mu(K)/2$. Observe that any ball with radius $\epsilon$ that intersects $S\cap K$ is contained in a ball $B(x,2\epsilon)$ with $x\in K$. Thus, $S\cap K$ can be covered by $\cN_{\epsilon}(S)$ balls with radius $2\epsilon$ and centers in $K$. If $2\epsilon\le r_0$, then each ball satisfies $\mu(B(x,2\epsilon)) \le (2\epsilon)^s$ and hence
$$
\cN_{\epsilon}(S) \ge (2\epsilon)^{-s} \mu(K)/2.
$$
Therefore, for all $\delta<\mu(K)/2$,
$$
\liminf_{\epsilon \to 0} \frac{\log_2 \cN_\epsilon(\mu,\delta)}{-\log_2 \epsilon} \ge \liminf_{\epsilon \to 0} \frac{-s(\log_2\epsilon+1) + \log_2 \mu(K) -1}{-\log_2 \epsilon} = s.
$$
Consequently, $s_*(\mu)\ge s$. Since $s< \dim_H(\mu)$ is arbitrary, we have $\dim_H(\mu) \le s_*(\mu)$.

\subsection{Proof of Proposition \ref{prop:f-div}}

By the convexity of $f$, the right derivative
$$
f'_+(t) := \lim_{\epsilon \downarrow 0} \frac{f(t+\epsilon)-f(t)}{\epsilon}
$$
always exists and is finite on $(0,\infty)$. Let
$$
\tilde{f}(t) := f(t) - f'_+(1)(t-1) \ge 0,
$$
then $\tilde{f}$ is strictly convex, decreasing on $(0,1)$ and increasing on $(1,\infty)$ with $\tilde{f}(1)=0$. It is easy to check that $f$ and $\tilde{f}$ induce the same divergence $D_f=D_{\tilde{f}}$. Hence, substituting by $\tilde{f}$ if necessary, we can always assume that $f$ is strictly convex with the global minimum $f(1)=0$.

By Lebesgue's decomposition theorem, we have
$$
\mu = \mu_a + \mu_s,
$$
where $\mu_a \ll \gamma$ and $\mu_s \perp \gamma$. A simple calculation shows that
$$
D_f(\mu \| \gamma) = \int_{\bR^d} f\left(\frac{ d\mu_a}{ d\gamma} \right) d \gamma +  f^*(0)\mu_s(\bR^d).
$$
Since $\mu$ and $\gamma$ are singular, we have $\mu_a=0$ and $\mu_s(\bR^d)=1$. Therefore, $D_f(\mu \| \gamma) = f(0)+f^*(0)>0$.

\subsection{Proof of Lemma \ref{CPwL}}

We follow the construction in \citep[Lemma 3.3 and 3.4]{daubechies2021nonlinear}. Recall that $\cS^d(z_0,\dots,z_{N+1})$ is the set of all continuous piecewise linear (CPwL) functions which have breakpoints only at $z_0<z_1<\dots<z_N<z_{N+1}$ and are constant on $(-\infty,z_0)$ and $(z_{N+1},\infty)$. It is easy to check that $\cS^d(z_0,\dots,z_{N+1})$ is a linear space. We denote the $dN$-dimensional linear subspace of $\cS^d(z_0,\dots,z_{N+1})$ that contains all functions which vanish outside $[z_0,z_{N+1}]$ by $\cS^d_0(z_0,\dots,z_{N+1})$. 

When $N=qW$ for some integers $q$ and $W$, we can construct a basis of $S^1_0(z_0,\dots,z_{N+1})$ as follows: for $1\le m\le q$ and $1\le j\le W$, let $h_{m,j}$ be the hat function which vanishes outside $[z_{jq-m},z_{jq+1}]$, takes the value one at $z_{jq}$ and is linear on each of the intervals $[z_{jq-m},z_{jq}]$ and $[z_{jq},z_{jq+1}]$. The breakpoint $z_{jq}$ is called the principal breakpoint of $h_{m,j}$. We order these hat functions by their leftmost breakpoints $z_{n-1}$ and rename them as $h_n$, $n=1,\dots N$, that is $h_n=h_{m,j}$ where $n-1=jq-m$. It is easy to check that $h_n$'s are a basis for $S^1_0(z_0,\dots,z_{N+1})$. The following lemma is a modification of \citep[Lemma 3.3]{daubechies2021nonlinear}. Recall that $\sigma(x) := \max\{x,0\}$ is the ReLU function.

\begin{lemma}\label{CPwL two layer}
For any breakpoints $z_0<\dots<z_{N+1}$ with $N=qW$, $q=\lfloor \frac{W}{6d} \rfloor$, $W\ge 6d$, we have $\cS^d_0(z_0,\dots,z_{N+1})\subseteq \cN\cN(W,2)$.
\end{lemma}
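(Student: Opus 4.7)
The plan is to extend the two-hidden-layer construction of Daubechies et al.\ (Lemma~3.3 in \cite{daubechies2019nonlinear}), which handles the scalar space $\cS^1_0$, to the vector-valued space $\cS^d_0$ by exploiting the fact that the first hidden layer is coordinate-agnostic. Since $\cS^d_0(z_0,\dots,z_{N+1})$ is a linear space and any $f \in \cS^d_0$ admits a representation $f(z) = \sum_{m=1}^q \sum_{j=1}^W c_{m,j}\,h_{m,j}(z)$ with $c_{m,j} \in \bR^d$, it suffices to realize one such $f$ as a network in $\cN\cN(W,2)$. Coordinate-wise, $f^{(i)} = \sum_{m,j} c_{m,j}^{(i)}\,h_{m,j} \in \cS^1_0(z_0,\dots,z_{N+1})$ for each $i = 1, \dots, d$.

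The first hidden layer, of width $W$, would consist of the ramp features $u_j(z) = \sigma(z - z_{jq})$ for $j = 1, \dots, W$, which encode the $W$ principal breakpoints at the macro scale. These features are shared across all output coordinates and form the geometric skeleton of the construction. The second hidden layer, of width $W = 6dq$, would be partitioned into $d$ disjoint groups $G_1,\dots,G_d$ of size $6q$ each, with group $G_i$ dedicated to producing the scalar output $f^{(i)}$. Within each $G_i$, one applies the scalar construction: further partition the $6q$ neurons into $q$ sub-blocks of $6$, one per micro-index $m$, and use the zero-crossing mechanism---an outer ReLU applied to a carefully chosen linear combination of the first-layer features $u_j$---to introduce micro breakpoints at $z_{jq-m}$ for each $j = 1, \dots, W$. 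The scalar Lemma~3.3, applied with asymmetric layer widths ($W_1 = W$ shared, $W_2 = 6q$ per group), guarantees that this $6q$-neuron sub-network, fed with $W$ shared first-layer inputs, realizes any scalar CPwL in $\cS^1_0$ with $qW = N$ breakpoints. The output matrix $M_2 \in \bR^{d \times W}$ is then taken block-diagonal: the $i$-th row is supported on the indices of $G_i$ and routes that group's scalar output to the $i$-th coordinate of $f$.

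The main obstacle is the scalar sub-construction inside each group $G_i$: producing a specified combination of hat functions $\sum_{m,j} c_{m,j}^{(i)}\,h_{m,j}$ using only $6q$ second-layer neurons while placing zero-crossings precisely at the prescribed breakpoints $z_{jq-m}$. This is the heart of Daubechies et al.'s Lemma~3.3 and the required asymmetric variant; the factor $1/d$ in the allowable $q$ (namely $q = \lfloor W/(6d) \rfloor$ instead of $\lfloor W/6 \rfloor$) arises exactly because the second-layer budget of $W$ must be split into $d$ independent scalar sub-networks, whereas the first-layer budget of $W$ is shared for free across all output coordinates. Verifying that the scalar Daubechies construction genuinely tolerates this asymmetry---and that the block-diagonal output layer does not spoil the zero-crossing design inside each group---is the essential bookkeeping task.
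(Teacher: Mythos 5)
Your proposal is correct and follows essentially the same route as the paper: a shared first layer of ramps $\sigma(z-z_{jq})$ at the $W$ principal breakpoints, a second layer split into $d$ per-coordinate blocks of $6q$ neurons each (sign times separation classes of the hat-function basis, realized as $\pm\sigma(g_{i,k}(z))$ with $g_{i,k}$ a linear combination of the ramps), and a block-structured output layer, with the width budget $6dq\le W$ explaining the factor $1/d$ in $q$. The only difference is cosmetic bookkeeping in how the $6q$ neurons per coordinate are indexed, and like the paper you defer the explicit zero-crossing construction to Daubechies et al.'s scalar lemma.
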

\begin{proof}
For any function $f=(f_1,\dots,f_d)\in \cS^d_0(z_0,\dots,z_{N+1})$, each component can be written as $f_i = \sum_{n=1}^N c_{i,n}h_n$. For each $f_i$, we can decompose the indices as $\{1,\dots,N\} = \Lambda^i_+ \cup \Lambda^i_-$, where for each $n\in \Lambda^i_+$, we have $c_{i,n}\ge 0$ and for each $n\in \Lambda^i_-$, we have $c_{i,n}< 0$. We then divide each of $\Lambda^i_+$ and $\Lambda^i_-$ into at most $3q$ sets, which are denoted by $\Lambda^i_k$, $1\le k\le 6q$, such that if $n,n'\in \Lambda^i_k$, then the principal breakpoints $z_{jq}, z_{j'q}$ of $h_n, h_{n'}$ respectively, satisfy the separation property $|j-j'|\ge 3$. Then, we can write
$$
f_i = \sum_{k=1}^{6q} f_{i,k}, \qquad f_{i,k} := \sum_{n\in \Lambda^i_k} c_{i,n} h_n, 
$$
where we set $f_{i,k} =0$ if $\Lambda^i_k = \emptyset$. By construction, in the second summation, the $h_n$, $n\in \Lambda^i_k$, have disjoint supports and the $c_{i,n}$ have same sign.

Next, we show that each $f_{i,k}$ is of the form $\pm \sigma(g_{i,k}(z))$, where $g_{i,k}$ is some linear combination of the $\sigma(z-z_{jq})$. First consider the case that the coefficients $c_{i,n}$ in $\Lambda^i_k$ are all positive. Then, we can construct a CPwL function $g_{i,k}$ that takes the value $c_{i,n}$ for the principal breakpoints $z_{jq}$ of $h_n$ with $n\in \Lambda^i_k$ and takes negative values for other principal breakpoints such that it vanishes at the leftmost and rightmost breakpoints of all $h_n$ with $n\in \Lambda^i_k$. This is possible due to the separation property of $\Lambda^i_k$ (an explicit construction strategy can be found in the appendix of \citep{daubechies2021nonlinear}). By this construction, we have $f_{i,k} = \sigma(g_{i,k}(z))$. A similar construction can be applied to the case that all coefficients $c_{i,n}$ in $\Lambda^i_k$ are negative and leads to $f_{i,k} = -\sigma(g_{i,k}(z))$.

Finally, each $f_i = \sum_{k=1}^{6q} f_{i,k}$ can be computed by a network whose first layer has $W$ neurons that compute $\sigma(z-z_{jq})$, $1\le j\le W$, second layer has at most $6q$ neurons that compute $\sigma(g_{i,k}(z))$, and output layer weights are $\pm 1$ or $0$. Since the first layers of these networks are the same, we can stack their second layers and output layers in parallel to produce $f=(f_1,\dots,f_d)$, then the width of the stacked second layer is at most $6dq\le W$. Hence, $f\in \cN\cN(W,2)$.
\end{proof}

We can use Lemma \ref{CPwL two layer} as a building block to represent CPwL functions with large number of breakpoints. Lemma \ref{CPwL} is a consequence of the following lemma by change of variables $W'=W+d+1$ and $L'=2L$.

\begin{lemma}
Suppose that $W\ge 6d$, $L\ge 1$ and $N\le W\lfloor \frac{W}{6d} \rfloor L$. Then for any $z_0<z_1<\dots<z_N<z_{N+1}$, we have $\cS^d(z_0,\dots,z_{N+1}) \subseteq \cN\cN(W+d+1,2L)$.
\end{lemma}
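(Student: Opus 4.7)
The plan is to chain together $L$ copies of the two-layer subnetwork from Lemma~\ref{CPwL two layer}, using the extra width $d+1$ to maintain a $d$-dimensional accumulator and pass the input through all layers.

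First I would translate so $z_0 = 0$ and partition the breakpoints into $L$ consecutive blocks of exactly $qW$ each (padding with dummy breakpoints if $N < qWL$), where $q = \lfloor W/(6d) \rfloor$, with block boundaries $z_{k_0} = z_0 < z_{k_1} < \cdots < z_{k_L} = z_{N+1}$. Then decompose
\[
f \;=\; F_0 + \sum_{l=1}^L \hat\delta_l,
\]
where $F_0$ is the coarse CPwL function interpolating $f$ at the block boundaries (constant outside $[z_0, z_{N+1}]$) and each $\hat\delta_l \in \cS^d_0(z_{k_{l-1}},z_{k_{l-1}+1},\ldots,z_{k_l})$ is the residual supported on block $l$. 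Lemma~\ref{CPwL two layer} applied to $\hat\delta_l$ gives a two-layer subnetwork whose first-layer features are $\sigma(z - z_{k_{l-1}+jq})$ for $j=1,\ldots,W$ (the rightmost being $\sigma(z - z_{k_l})$) and whose second-layer features linearly combine to $\hat\delta_l$. Writing $F_0(z) - f(z_0) = \sum_{l=0}^L (m_{l+1}-m_l)\sigma(z-z_{k_l})$, with $m_l$ the slope of $F_0$ on the $l$-th block and $m_0 = m_{L+1} = 0$, shows that all ReLU building blocks of $F_0$ are already first-layer features of some stage (with $\sigma(z - z_0)$ identified with the pass-through of $z$).

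I would then build the depth-$2L$ network stage by stage. Each hidden layer allocates $1$ neuron to carry the (possibly clipped) input, $d$ neurons to a shifted accumulator $s+C$ for a large constant $C$ enforcing non-negativity through ReLU, and $W$ neurons to the current stage's features, giving total width $W+d+1$. Stage $l$ occupies hidden layers $2l-1$ and $2l$. The affine map after layer $2l-1$ adds the ramp contribution $(m_{l+1}-m_l)\sigma(z-z_{k_l})$ to the accumulator using the $j=W$ feature and simultaneously spawns stage $l$'s second-layer features; the affine map after layer $2l$ adds $\hat\delta_l(z)$ to the accumulator and spawns stage $l+1$'s first-layer features. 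The output map $A_{2L}$ produces $s_L + \hat\delta_L(z) - C = f(z)$.

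The main technical obstacle is keeping the accumulator non-negative through every ReLU with a single offset $C$: individual ramp terms $\sigma(z-z_{k_l})$ grow unboundedly as $z\to\infty$, so partial accumulator values could become unboundedly negative and be spuriously clipped. I would resolve this by working with the clipped input $z^* = z_0 + \sigma(z-z_0) - \sigma(z-z_{N+1}) \in [z_0, z_{N+1}]$; since $f$ is constant outside $[z_0,z_{N+1}]$ we have $f(z) = f(z^*)$, and the identity $\sigma(z^* - z_j) = \sigma(z-z_j) - \sigma(z-z_{N+1})$ (valid for $z_j \le z_{N+1}$) allows all stage features to be expressed as affine combinations of ReLU features of the unclipped $z$ together with a single extra $\sigma(z-z_{N+1})$ primitive computed in the first layer. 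The resulting clipped features are bounded uniformly in $z$, so all partial accumulator values are bounded and a sufficiently large $C$ suffices; a careful bookkeeping then confirms that everything fits within width $W+d+1$ and depth $2L$.
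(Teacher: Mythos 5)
Your proposal is correct and follows essentially the same route as the paper's proof: decompose $f$ into a coarse trend plus $L$ block-local residuals lying in $\cS^d_0$, realize each residual by the width-$W$, depth-$2$ network of Lemma~\ref{CPwL two layer}, and chain the $L$ stages serially using one pass-through channel and $d$ accumulator channels made ReLU-compatible by a constant shift, giving width $W+d+1$ and depth $2L$. The only differences are cosmetic: the paper handles the trend with the endpoint interpolant $g_0$ built from the bounded $\sigma(z)-\sigma(z-1)$ and forwards $\sigma(z)$ unclipped (boundedness of the partial sums then being automatic), whereas you distribute the block-boundary ramps of $F_0$ across the stages and clip the input to $[z_0,z_{N+1}]$ to keep the accumulator bounded.
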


\begin{proof}
By applying a linear transform to the input and adding extra breakpoints if necessary, we can assume that $z_0=0$ and $z_{N+1}=1$, where $N=qWL$ with $q=\lfloor \frac{W}{6d} \rfloor$. For any $f=(f_1,\dots,f_d)\in \cS^d(z_0,\dots,z_{N+1})$, we denote $x_n = (x_{n,1},\dots, x_{n,d}) := f(z_n)$, where $x_{n,i} = f_i(z_n)$. We define 
$$
g_{0,i}(z) := x_{0,i} + (x_{N+1,i}-x_{0,i})(\sigma(z) - \sigma(z-1)),
$$
then $g_{0,i}$ is linear on $(0,1)$ and $g_{0,i}(z)=f_i(z)$ on $(-\infty,0)\cup(1,\infty)$. Let $g_0=(g_{0,1},\dots,g_{0,d})$, then $f-g_0 \in \cS^d_0(z_0,\dots,z_{N+1})$. We can decompose $f-g_0 = \sum_{l=1}^L g_l$, where $g_l\in \cS^d_0(z_0,\dots,z_{N+1})$ is the CPwL function that agree with $f-g_0$ at the points $z_i$ with $i=(l-1)qW+1,\dots,lqW$ and takes the value zero at other breakpoints. Obviously, $g_l \in \cS^d_0(z_{(l-1)qW},\dots,z_{lqW+1})$ and hence $g_l\in \cN\cN(W,2)$ by Lemma \ref{CPwL two layer}.

Next, we construct a network with special architecture of width $W+d+1$ and depth $2L$ that computes $f$. We reserve the first top neuron on each hidden layer to copy the non-negative input $\sigma(z)$. And the last $d$ neurons are used to collect intermediate results and are allowed to be ReLU-free. Since each $g_l \in \cN\cN(W,2)$, we concatenate the $L$ networks that compute $g_l$, $l=1,\dots,L$, and thereby produce $f-g_0$. Observe that $g_0 \in \cN\cN(d+1,1)$, we can use the last $d$ neurons on the first two layers to compute $g_0(z)$. Therefore, $f$ can be produced using this special network. The whole network architecture is showed in Figure \ref{special network architecture}.

\begin{figure}[htb]
\centering
\includegraphics[width=\textwidth]{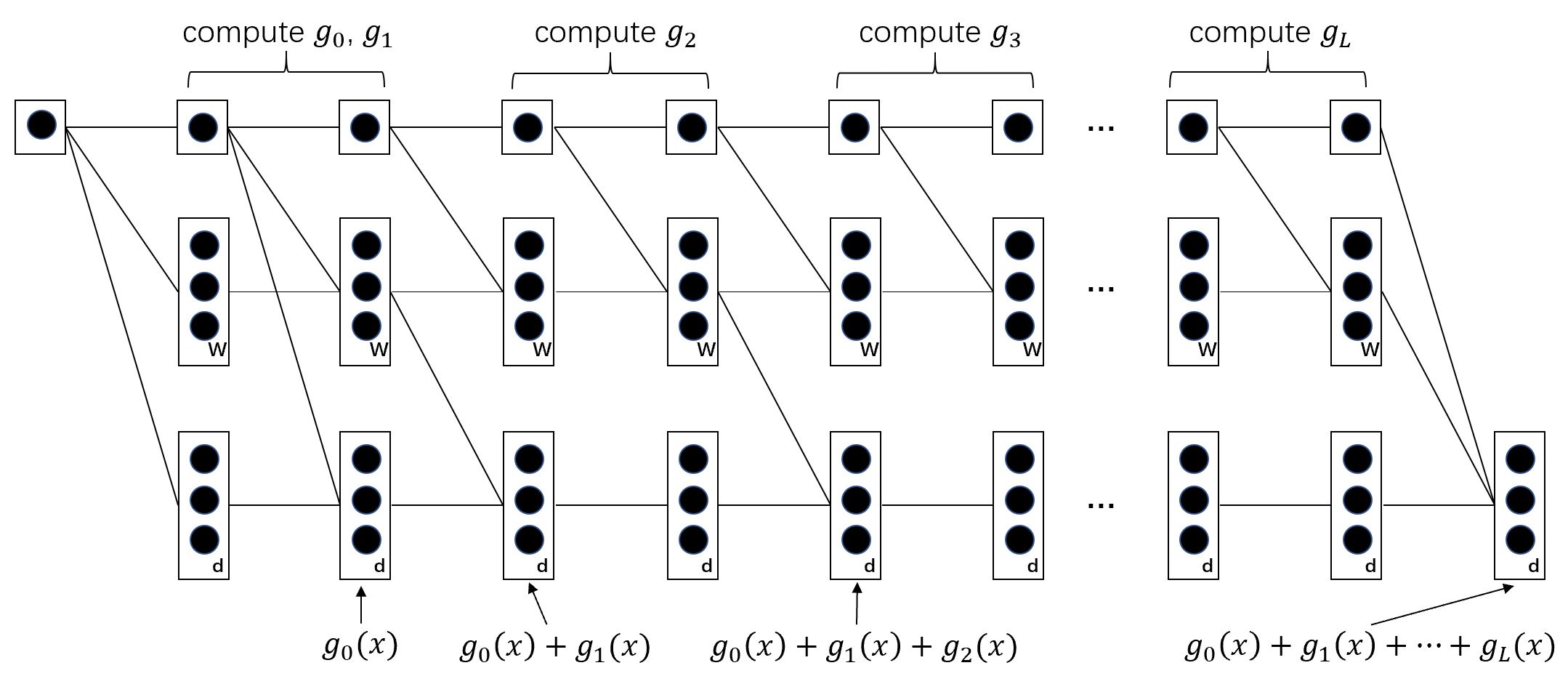}
\caption{The architecture of a neural network that produces $f= \sum_{l=0}^{L}g_l$. The letter on the lower right corner of each rectangle indicates the number of neurons in the rectangle.}
\label{special network architecture}
\end{figure}

Finally, suppose $S_l(z)$ is the output of the last $d$ neurons in layer $l$. Since $S_l(z)$ must be bounded, there exists a constant $C_l$ such that $S_l(z)+C_l>0$ and hence $S_l(z) = \sigma(S_l(z)+C_l)-C_l$. Thus, even though we allow the last $d$ neurons to be ReLU-free, the special network can also be implemented by a ReLU network with the same size. Consequently, $f\in \cN\cN(W+d+1,2L)$, which completes the proof.
\end{proof}

\bibliographystyle{plainnat}
\bibliography{references}

\end{document}